\definecolor{blues1}{RGB}{198, 219, 239}
\definecolor{blues2}{RGB}{158, 202, 225}
\definecolor{blues3}{RGB}{107, 174, 214}
\definecolor{blues4}{RGB}{49, 130, 189}
\definecolor{blues5}{RGB}{8, 81, 156}
\pgfplotsset{select coords between index/.style 2 args={
    x filter/.code={
        \ifnum\coordindex<#1\fi
        \ifnum\coordindex>#2\fi
    }
}}
\pgfplotsset{compat=1.11,
    /pgfplots/ybar legend/.style={
    /pgfplots/legend image code/.code={%
       \draw[##1,/tikz/.cd,yshift=-0.25em]
        (0cm,0cm) rectangle (3pt,0.8em);},
   },
}
\@citea\NAT@hyper@{%
     \NAT@nmfmt{\NAT@nm}%
     \hyper@natlinkbreak{\NAT@aysep\NAT@spacechar}{\@citeb\@extra@b@citeb}%
     \NAT@date}}
\@citea\NAT@nmfmt{\NAT@nm}%
\NAT@spacechar\NAT@hyper@{\NAT@date}}{}{}
\@citea\NAT@hyper@{%
     \NAT@nmfmt{\NAT@nm}%
     \hyper@natlinkbreak{\NAT@spacechar\NAT@@open\if*#1*\else#1\NAT@spacechar\fi}%
       {\@citeb\@extra@b@citeb}%
     \NAT@date}}
\@citea\NAT@nmfmt{\NAT@nm}%
\fi\NAT@hyper@{\NAT@date}}
\newcommand{\halfpage}{0.47\textwidth}
\newcommand{\thirdpage}{0.31\textwidth}
\newcommand{\quarterpage}{0.21\textwidth}
\newcommand{\defeq}{\vcentcolon=}
\newcommand{\freespace}{\mathscr{W}}
\newcommand{\astar}{A${}^\star$\xspace} 
\newcommand{\trpmpp}{{\sc trpmpp}\xspace}
\newcommand{\trpmppjr}{{\sc trpmpp} Jr.\xspace}
\newcommand{\pt}[1]{\mathbf{#1}}
\newcommand{\length}[1]{\mathfrak{L}\left[#1\right]}
\newcommand{\verts}[1]{\operatorname{verts}(#1)}
\newcommand{\wfeas}{$\freespace$-feasible motion for an $\ell$-length cable\xspace}
\newcommand{\unitInterval}{\mathbb{I}}
\newcommand{\cStar}{\mathbb{C}^\star\xspace}
\newcommand{\cat}[5]{\operatorname{cat}(#1; #2, #3, #4, #5)}
\newcommand{\cathat}[4]{\widehat{\operatorname{cat}}(\cdot;#1, #2, #3, #4)}
\newcommand{\trpmppTuple}{$(\freespace, O, \pt{r_a}, \pt{r_b}, \pt{d_a}, \pt{d_b}, \ell, c_0)$\xspace}
\newcommand{\frechet}{Fr\'{e}chet}
\newcommand{\deleted}[1]{}
\journalname{Autonomous Robots}
\begin{document}

\title{Motion Planning for a Pair of Tethered Robots
}

\author{Reza H. Teshnizi \and Dylan A. Shell}

\institute{
    R.H. Teshnizi \and D.A. Shell \at
    Distributed AI and Robotics Laboratory,\\
	Department of Computer Science and Engineering,\\
	Texas A\&M University, College Station, TX 77843, USA.\\
    \email{reza.teshnizi@gmail.com}
    \at
	\email{dshell@tamu.edu}
}

\maketitle

\begin{abstract}
Considering an environment containing polygonal obstacles,
we address the problem of planning motions for a pair of planar robots connected to one another via a cable of limited length. 
Much like prior problems with a single robot connected via a cable to a fixed base, straight line-of-sight visibility plays an important role. The present paper shows how the reduced visibility graph provides a natural discretization and captures the essential topological considerations very effectively for the two robot case as well.
Unlike the single robot case, however, the bounded cable length introduces considerations around coordination (or equivalently, when viewed from the point of view of a centralized planner, relative timing) that complicates the matter. 
Indeed, the paper has to introduce a rather more involved formalization than prior single-robot work in order to establish the core theoretical result---a theorem permitting the problem to be cast as one of finding paths rather than trajectories.
Once affirmed, the planning problem reduces to a straightforward graph search with an elegant representation of the connecting cable, demanding only a few extra ancillary checks that ensure sufficiency of cable to guarantee feasibility of the solution.
We describe our implementation of \astar search, and report experimental results.
Lastly, we prescribe an optimal execution for the solutions provided by the algorithm.
\keywords{Motion Planning \and Tethered Robots \and Multi-Robot Coordination \and A* Search}
\end{abstract}

\section{Introduction}\label{sec:intro}

\begin{figure}[t]
    \centering
    \includegraphics[width=\halfpage]{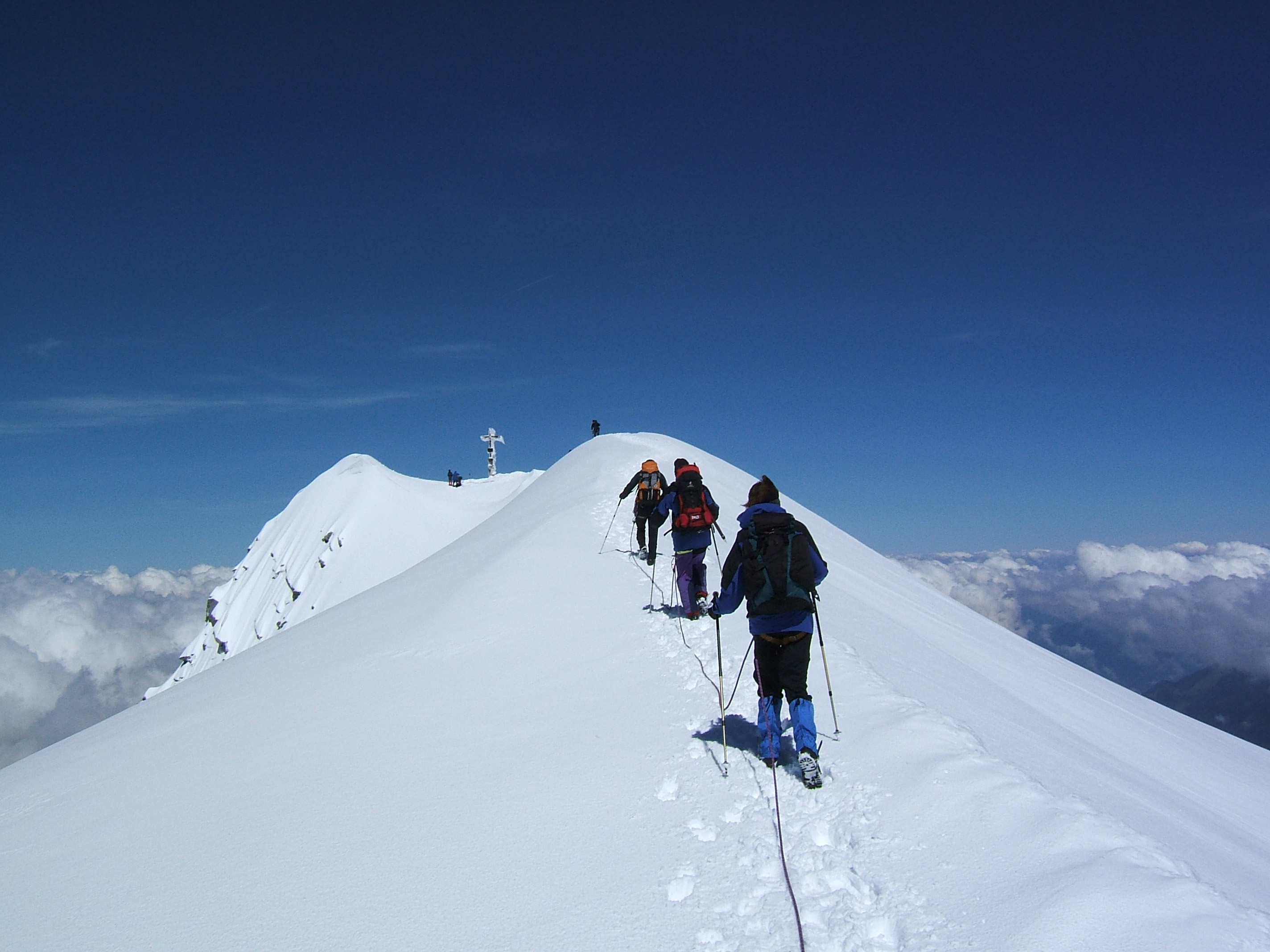}
    \caption{Rope team at a ridge nearing a summit. (Source: Wikimedia Commons)}
    \label{fig:roped-team}
\end{figure}

In recent years, a variety of techniques have been developed to plan motions for a tethered mobile robot~\citep{Igarashi2010-homotopic, teshnizi2014tethered, Kim2014-tethered-robot}.
A tether can be useful as a conduit for power or communication but 
the main motivating application for 
robotic tethers 
is in navigation of rovers in extreme terrain, where the tether can help provide physical security.
Examples of robotic rovers equipped in this way include TRESSA~\citep{Huntsberger2007-tressa}, Axel and DuAxel~\citep{Nesnas2012-axel-duaxel}, vScout~\citep{Stenning2015-vscout}, and TReX~\citep{McGarey2018-trex}, among others.
Humans deal with extreme terrain too. A common practice among mountaineers, as a measure of protection against falling, is to form a group that can move together while the members are roped to one another. This forms what is referred to as a \textit{rope team}~\citep{Gooding2014-snow-climbing} (Fig.~\ref{fig:roped-team}).
From the perspective of motion planning, one might interpret a rope team as a practical scenario in which a tethered robot's base is itself subject to motion.
A prominent robotic example of comparable operation is DuAxel, where two Axel rovers are connected to a central module.
DuAxel is designed to work as a \textit{mother-daughter ship}, having one Axel remain stationary with the central module while the other explores the terrain.
Indeed, enabling both rovers to be deployed simultaneously may benefit both agents and improve the versatility of the design.

One key to efficient solution of planning problems in finding a suitable representation, ideally one that expresses constraints and is amenable to adaptation and generalization to various requirements.
Much like our earlier work \citep{teshnizi2014tethered},
we take advantage of the properties of reduced visibility graph \citep{Latombe1991}.
Namely, we show certain characteristics of straight line motions enable us to find solutions to tethered pair problem with minimal book keeping.
Unlike that work, however, the proofs proceed without needing to supply geometric or topological insight from the
structure of configuration space (c-space).
We do discuss, in Section~\ref{sec:cspace-structure}, some of our understanding of the c-space of this problem.

The theory established in this work shows that we shall not be concerned with the intermediate state of the cable:
as long as we can achieve a goal configuration that is permitted by the length of the available cable, one can provide a planner to execute the motions that will transform the initial cable configuration to its final configuration.
This would be trivial if the constraints involved were purely topological or we considered only a single robot.
But with a finite cable, what one robot uses is unavailable to the other, coupling their motions in space and time.
On the basis of this theory, we introduce a tree data structure that represent different cable configurations up to homotopy.
Each branch in the tree down to a certain node is a representation of the shortest path required for the tethered pair to arrive at that node's cable configuration.
We have implemented \astar search to expand the search tree and find an optimal solution while keeping track of the cable's configuration up to homotopy.
Lastly, we detail how to turn paths  produced by the planner 
into trajectories for the motions
that also minimize a time cost.

\section{Related Work}\label{sec:related-works}

We are interested in what is perhaps the most natural motion planning question for a tethered pair of robots, namely finding 
paths to take a pair of tethered robots from some initial configuration to a goal one, never violating a bound on the tether's length throughout the motion.
Although motion planning for a single tethered robot has been extensively studied~\citep{shnaps-RSS-13-online-coverage-by-tethered-robot,teshnizi2014tethered, Kim2014-tethered-robot, Teshnizi2016-stiff-tether, McCammon2017-underwater-tethered, McGarey2017-tslam},
the literature reports comparatively little work on motion planning problems involving pairs of robots tethered to one another.

A notable exception is that pairs of conjoined robots have been studied for purposes of object manipulation.
\citet{kim2013-rss-separate-and-manipulate-objects} studied object separation using a pair of robots connected by a cable.
Though superficially similar to our problem, as it involves the motion of a mutually connected pair of robots, the separation problem imposes quite a different set of constraints to a shortest path planning problem.
For object separation the solution is only required to satisfy a homotopy requirement, allowing the robots to choose any arbitrary goal in the workspace that can satisfy such constraint.
Consequently, in that work, the two robots move to the boundaries of the workspace.
(This assumption also helps distinguish between separating versus non-separating configurations elegantly and concisely.) As 
\citeauthor{kim2013-rss-separate-and-manipulate-objects}
are addressing a problem where the goal is specified topologically and they are not concerned with a cable of finite length, several of the complications we tackle do not arise in their setting.

More recently, \citet{kim2018tail} demonstrated a physical multi-robot system in which robots can make or break tether connections dynamically; a planner exploits this capability to find ways in which a robot team can manipulate objects efficiently, either with single robots operating concurrently, or as coupled pairs, as  called for by the particular problem instance.  A part of that problem is combinatorial and the work uses a sampling-based method, the present work being distinguished from that work on both fronts.

Our own prior work on finding short paths for a pair of tethered robots~\citep{teshnizi-2016-tethered-paris-workshop}, attempted to use the solution to a single robot problem as an algorithmic building block.
That approach was devised primarily as a means to build intuition for the topology of the four dimensional configuration space;
the algorithm outlined in that work is inadequate, being neither a solution to the complete problem nor one that always yields optimal paths.
Toward the end of the paper, we will return to this matter of visualizing the c-space in light of the correct and complete algorithm that is in this paper.

\section{Problem statement} \label{sec:problem-stat}

Let $O =\{ o_1, o_2, o_3, \dots, o_n \} $ be a (possibly empty) set of pairwise disjoint polygonal obstacles with vertices (written $\verts{O}$) in $\mathbb{R}^2$, with boundary curves $\delta o_1, \delta o_2,$ $\dots, \delta o_n$, respectively.
We will write $\freespace$ for the 
free space, so $\freespace = (\mathbb{R}^2 \backslash \bigcup_{i = 1}^n o_i) \cup (\bigcup_{i = 1}^n \delta o_i)$.
Further, let robots $a$ and $b$ be two unoriented points in $\freespace$ that are connected to one another via a cable of finite length.
For conciseness, let $\unitInterval = [0,1] $ be the unit interval.

\begin{definition}[\trpmpp]\label{def:problem}
    The tethered robot pair motion planning problem (\trpmpp) is a tuple,\\
    \trpmppTuple, wherein:
    \begin{itemize}
    	\item $\freespace$ is the free space,
    	\item $ O $ is the set of obstacles,
    	\item $\pt{r_a} \in \freespace$ is the initial position of $a$,
    	\item $\pt{r_b} \in \freespace$ is the initial position of $b$,
    	\item $\pt{d_a} \in \freespace$ is the goal or destination of $a$,
    	\item $\pt{d_b} \in \freespace$ is the goal or destination of $b$,
    	\item $\ell\in \mathbb{R}^{+} $ is the length of the tether,
    	\item $c_0: \unitInterval \to \freespace$ is the initial arrangement of the cable in $\freespace$, where
    	    $c_0(0) = \pt{r_a}$, and
    	    $c_0(1) = \pt{r_b}$, and\footnote{We find it convenient to use $\length{f}$ to denote arc length of functions $f: \unitInterval \to \freespace$ throughout this work.}
    	    $$ \length{c_0(s)} \defeq  \int_\unitInterval c_0(s) \mathop{ds} \leq \ell. $$
    	    
    \end{itemize}
    A solution to \trpmpp \trpmppTuple is a pair of paths $(\tau_a, \tau_b)$ in which:
        \begin{itemize}
            \item $ \tau_a: \unitInterval \to \freespace$ where $ \tau_a(0) = \pt{r_a} $ and $ \tau_a(1) = \pt{d_a} $, and
            \item $ \tau_b: \unitInterval \to \freespace$ where $ \tau_b(0) = \pt{r_b} $ and $ \tau_b(1) = \pt{d_b} $, and
            \item for $\tau_a$ and $\tau_b$, there exists a \wfeas, as formalized next.
        \end{itemize}
\end{definition}

\begin{definition}[\wfeas] \label{def:wfeas}
    In the closed subset of the plane $\freespace \subseteq \mathbb{R}^2$,
    for two paths $\tau_a, \tau_b: \unitInterval \to \freespace$ and length $\ell\in\mathbb{R}^+$,
    a function  $ c: \unitInterval \times \unitInterval \to \freespace$ is called a \emph{\wfeas} if and only if the following three conditions hold:
    \begin{enumerate}
        \item[\sc c-i]\label{def:wfeas-c1} (The cable connects the robots)
            $$ \forall s \in \unitInterval: c(s, 0) = \tau_a(s) \textrm{ and } c(s, 1) = \tau_b(s), $$
        \item[\sc c-ii]\label{def:wfeas-c2} (Cable has bounded length)
            $$ \forall s \in \unitInterval: \length{c_s(x)} \leq \ell, \textrm{ where } c_s(x) \defeq c(s,x),$$
        \item[\sc c-iii]\label{def:wfeas-c3} (Continuity)
            $ c $ is continuous with respect to the induced topologies.
    \end{enumerate}
\end{definition}

A visual example that serves as a quick summary of the preceding definitions appears in Fig.~\ref{fig:trpmpp-solution}.

\begin{definition}[Distance optimality]\label{def:optimal-solution}
    A solution pair $(\tau^\star_a, \tau^\star_b)$ for \trpmpp $(\freespace, O, \pt{r_a}, \pt{r_b}, \pt{d_a}, \pt{d_b}, \ell, c_0)$ is called \emph{distance optimal} if all other solutions $(p, q)$ have 
    $$\max\left(\length{\tau^\star_a}, \length{\tau^\star_b}\right) \leq \max\left(\length{p}, \length{q}\right).$$ 
\end{definition}

This particular optimality metric models minimum energy consumption between the two robots.
We will present a method that finds a distance optimal solution to a \trpmpp.
In Section~\ref{sec:optimal-control} we show that solutions under this distance metric can yield optimal solutions under a time-based criterion too.
    
\section{The solution concept} \label{sec:solution}

\begin{figure}[t]
    \centering
    \includegraphics[scale=0.8]{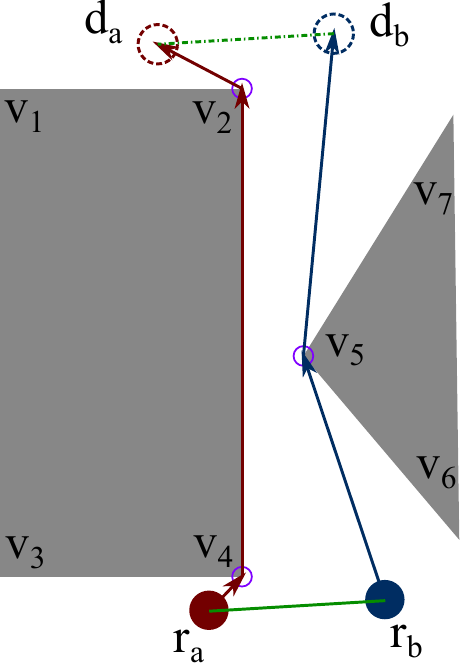}
    \caption{
        The green line segment is the initial cable configuration and the green dashed line segment is the final cable configuration.
        The arrows show the distance optimal path for each robot.
    }
    \label{fig:trpmpp-solution}
\end{figure}

We prove, first, that it suffices to have robots $a$ and $b$ move on a straight line from one vertex to another in the reduced visibility graph (RVG). (Though our interest is in characterizing the solution set, this is potentially also good news in terms of the sensing technology needed for the robots to be able to execute optimal plans, cf.~\citet{Tovar2007-distance-optimal}.)
The following theorem and widely known corollary indicate that, for a single robot, the shortest path between two points in $\freespace$ is a  concatenation of line segments that are edges of the RVG.

\begin{theorem}
    There exists a semi-free path between any two given points $\pt{p}$ and $\pt{q}$
    if and only if there exists a simple polygonal line $T$ lying in $\freespace$
    whose endpoints are $\pt{p}$ and $\pt{q}$, and such that $T$'s vertices are in $\verts{O}$.
    \citep{Latombe1991, DeBerg2008-comp-geometry}
\end{theorem}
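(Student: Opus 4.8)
The plan is to prove the two directions separately, with essentially all of the work concentrated in the forward (``only if'') implication. The reverse direction is immediate: a simple polygonal line $T \subseteq \freespace$ with endpoints $\pt{p}$ and $\pt{q}$ is, under any continuous parametrization by $\unitInterval$, itself a semi-free path, since by hypothesis its image lies in $\freespace$ and hence never enters the interior of any $o_i$.

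For the forward direction, suppose some semi-free path from $\pt{p}$ to $\pt{q}$ exists, so that $\pt{p}$ and $\pt{q}$ lie in the same connected component $C$ of $\freespace$. I would first argue that a \emph{length-minimizing} semi-free path from $\pt{p}$ to $\pt{q}$ exists: the set of lengths of rectifiable semi-free paths joining $\pt{p}$ to $\pt{q}$ inside $C$ is nonempty (one can always route a crude polygonal detour around the finitely many obstacles) and bounded below by $\dist{p}{q}$, and a standard compactness argument then yields a minimizer $\gamma$ --- reparametrize candidates to constant speed, restrict to a large compact disk containing the relevant bounded-length paths, apply the Arzel\`a--Ascoli theorem, use that $\freespace$ is closed so the uniform limit is again semi-free, and use lower semicontinuity of arc length.

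The heart of the argument is the local structure of $\gamma$: I claim $\gamma$ can change direction only at points of $\verts{O}$. Fix an interior point $\pt{x}$ of $\gamma$. If $\pt{x}$ lies in the interior of $\freespace$ (off every boundary curve $\delta o_i$), then a small open disk $D$ about $\pt{x}$ lies entirely in $\freespace$; replacing the sub-arc of $\gamma$ through $\pt{x}$ that stays in $D$ by the straight chord between its entry and exit points keeps the path semi-free and cannot increase length, so minimality forces $\gamma$ to already be that chord near $\pt{x}$, i.e.\ locally straight. If $\pt{x}$ lies on an edge of some $o_i$ but is not a vertex, then in a sufficiently small disk about $\pt{x}$ the free space is exactly a closed half-plane, which is convex; the chord between a point on the incoming arc and a point on the outgoing arc therefore remains in that half-plane, hence semi-free, and minimality again forces local straightness. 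Consequently the only possible direction changes of $\gamma$ occur at points of $\verts{O}$, a finite set, so $\gamma$ is a polygonal line all of whose interior vertices belong to $\verts{O}$. Finally, if $\gamma$ were not simple it would contain a closed sub-loop whose excision produces a strictly shorter semi-free path with the same endpoints, contradicting minimality; hence $\gamma$ is simple and $T \defeq \operatorname{image}(\gamma)$ is the required polygonal line.

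The step I expect to be the main obstacle is establishing that the infimum of lengths is actually \emph{attained}: the semi-free path handed to us by the hypothesis need not be rectifiable and $\freespace$ is unbounded, so one must either first replace it by a finite-length polygonal approximation within the same component of $\freespace$, or set up the Arzel\`a--Ascoli argument carefully over the family of bounded-length (rather than merely continuous) paths. The local short-cutting arguments, although geometrically transparent, also require a little care where $\gamma$ runs \emph{along} an obstacle edge for a positive-length stretch --- there ``locally straight'' and ``following the edge'' coincide and cause no trouble --- and at vertices of $O$ that $\gamma$ merely grazes without turning.
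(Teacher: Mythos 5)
Your proof is correct, but note that the paper itself offers no proof of this statement --- it is quoted as a classical result with citations to \citet{Latombe1991} and \citet{DeBerg2008-comp-geometry}. Your argument (trivial reverse direction; for the forward direction, extract a length-minimizing semi-free path via constant-speed reparametrization, Arzel\`a--Ascoli on a compact disk, closedness of $\freespace$, and lower semicontinuity of arc length, then show by local shortcutting in disks and half-disks that the minimizer can only turn at points of $\verts{O}$ and must be simple) is essentially the standard proof given in those references, and the subtleties you flag --- attainment of the infimum for a possibly non-rectifiable initial path, and grazing contact along obstacle edges --- are exactly the ones that need the care you describe.
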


\begin{figure*}[t]
    \centering
    \begin{subfigure}[t]{\thirdpage} \centering
        \includegraphics[height=4cm]{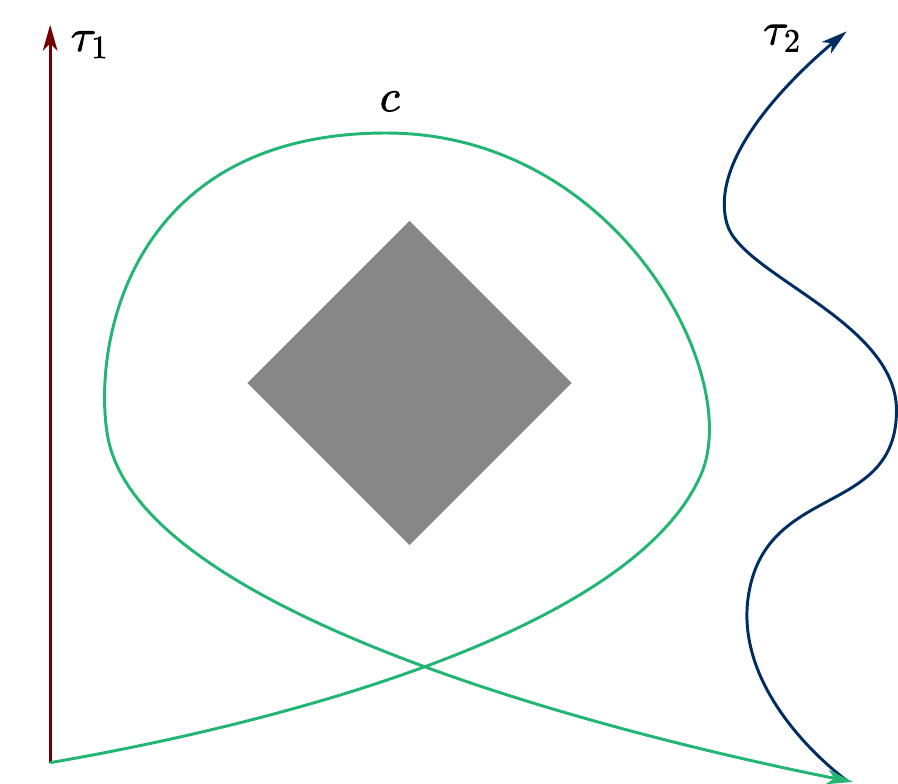}
        \caption{
            The three paths used in the $\operatorname{cat}$ operator.
        }
    \end{subfigure}\hfill%
    \begin{subfigure}[t]{\thirdpage} \centering
        \includegraphics[height=4cm]{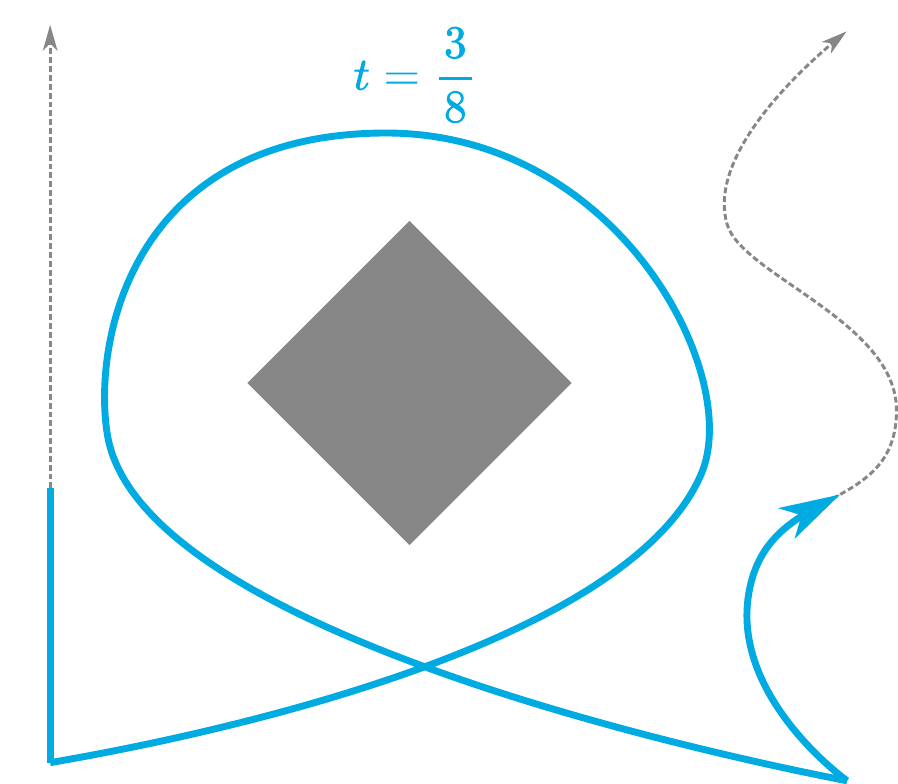}
        \caption{
            Choosing a value for $t$ gives a curve.
        }
    \end{subfigure}\hfill%
    \begin{subfigure}[t]{\thirdpage} \centering
        \includegraphics[height=4cm]{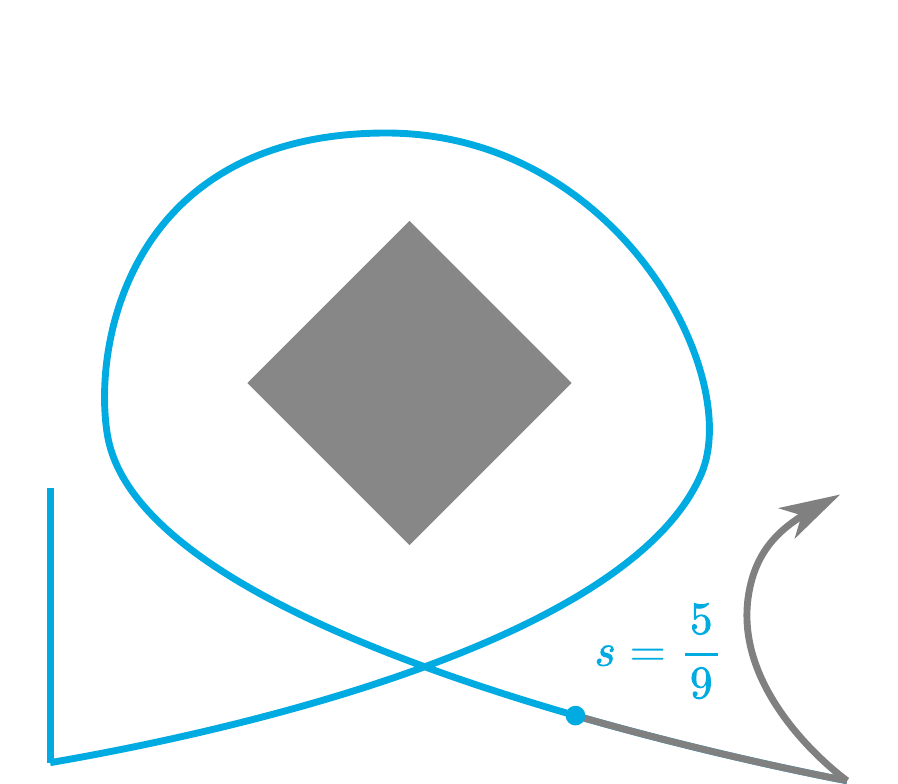}
        \caption{
            Choosing a value for $s$ gives a point of the curve.
        }
    \end{subfigure}
    \caption{A visual explanation of how three paths are connected via the $\operatorname{cat}$ operator.}\label{fig:cat}
\end{figure*}

More useful for us the statement below which, though only stated informally by
\citet{Lavalle1999-planning-algs}, follows directly the previous result.

\begin{corollary}\label{thm:shortest-path-on-rvg}
    The shortest path for a robot from one point to another in a subset of $\mathbb{R}^2$ can be found by searching the shortest path roadmap or reduced visibility graph.
\end{corollary}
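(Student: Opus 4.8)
The plan is to derive the corollary from the quoted Theorem together with a standard curve-shortening argument. First, dispose of the trivial case: if $\pt{p}$ and $\pt{q}$ lie in different connected components of $\freespace$ there is no path and nothing to search for, so assume a path exists. By the Theorem this is equivalent to the existence of a simple polygonal line $T \subseteq \freespace$ from $\pt{p}$ to $\pt{q}$ with all vertices of $T$ in $\verts{O}$; in particular the collection of $\freespace$-paths joining $\pt{p}$ to $\pt{q}$ is non-empty and contains one of finite length $\length{T}$.

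Second, show a length-minimizer exists. Restrict attention to paths of length at most $\length{T}$ and reparametrize each at constant speed; this yields a family of maps $\unitInterval \to \freespace$ that is uniformly Lipschitz, hence equicontinuous and pointwise bounded, so by Arzel\`{a}--Ascoli some sequence approaching the infimal length converges uniformly. The limit again lies in $\freespace$ because $\freespace$ is closed, still joins $\pt{p}$ to $\pt{q}$, and arc length is lower semicontinuous under uniform convergence; therefore the limit attains the infimum. Fix such a minimizer $\gamma$.

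Third, identify the shape of $\gamma$. Any open subarc of $\gamma$ that stays in the interior of $\freespace$ must be a straight segment, since otherwise replacing a small neighbourhood of a non-collinear point by the corresponding chord strictly shortens $\gamma$ while remaining in $\freespace$ (the interior is open), contradicting minimality. Hence $\gamma$ is an alternation of straight segments and contacts with $\delta O$, and --- the obstacles being polygonal --- a taut-string argument shows that between consecutive straight pieces $\gamma$ can bend only where it is forced to wrap a reflex corner of the free space, i.e.\ at a point of $\verts{O}$. (Alternatively one may bypass this by noting that the Theorem already supplies a polygonal competitor with vertices in $\verts{O}$ and then arguing any minimizer shares this structure.) Thus $\gamma$ is a concatenation of segments whose endpoints lie in $\{\pt{p},\pt{q}\}\cup\verts{O}$, each joining two mutually visible points; equivalently $\gamma$ is a walk in the visibility graph with node set $\{\pt{p},\pt{q}\}\cup\verts{O}$ and Euclidean edge weights, so an ordinary shortest-path search on that graph returns a path of length $\length{\gamma}$.

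Finally, pass to the \emph{reduced} visibility graph. The RVG deletes precisely those visibility edges that fail to be locally tangent to $O$ at an endpoint, that is, edges leaving a vertex $v$ toward the obstacle side. The key observation is that a locally shortest polygonal path is taut: at every interior vertex it turns the short way around the reflex corner, so both of the edges of $\gamma$ incident to that vertex are tangent to $O$ there, while the edges incident to $\pt{p}$ and $\pt{q}$ carry no such constraint. Hence every edge used by $\gamma$ survives the reduction, some shortest path lies entirely within the RVG, and searching the RVG is sufficient. I expect the main obstacle to be this last step --- rigorously showing that a non-tangent edge can always be shortcut, so that discarding such edges removes no shortest path --- together with the taut-string claim in the third step; the existence argument and the ``straight-in-the-interior'' observation are routine.
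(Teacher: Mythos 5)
Your proposal is correct, but there is nothing in the paper to compare it against: the authors explicitly treat this corollary as a classical fact, stating that it ``follows directly'' from the preceding theorem and citing \citet{Lavalle1999-planning-algs} (and, for the theorem itself, \citet{Latombe1991} and \citet{DeBerg2008-comp-geometry}) rather than supplying any argument. What you have written is essentially the standard textbook derivation --- existence of a minimizer via Arzel\`a--Ascoli and lower semicontinuity of arc length (using that $\freespace$ is closed, as the paper's definition indeed includes the obstacle boundaries), straightness of the minimizer in the interior of $\freespace$, turning only at reflex obstacle vertices, and tautness at each interior turn forcing both incident segments to be tangent so that the path survives the reduction to the RVG. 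The one place I would tighten the wording is your characterization of the discarded edges: a visibility edge that fails the tangency test at a vertex $v$ is not one that leaves $v$ ``toward the obstacle side'' (such a segment would not be a visibility edge at all), but one for which the two obstacle edges incident to $v$ lie on opposite sides of the line through the segment, so that the corner at $v$ can be cut; with that phrasing the shortcut argument in your last step goes through cleanly. In short, you have supplied a complete proof of a statement the paper deliberately imports from the literature, and the argument is sound.
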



The list of RVG vertices $\pi_a = (v^0_a, v^1_a, \dots, v^{n-1}_a, v^n_a)$ connecting initial and destination positions, \textit{i.e.}, with $\pt{r_a}=v^0_a$ and $\pt{d_a}=v^n_a$ , is easily turned into a curve, $\tau_a(s)$, by joining line segments connecting $v^{i-1}_a$ to $v^{i}_a$ sequentially, head to tail, and parameterizing appropriately via $\unitInterval$.
Hence, a pair of sequences of RVG vertices for the two robots, $(\pi_a, \pi_b)$, suffices to give a pair of paths $(\tau_a, \tau_b)$.

Although the preceding classical results hold for individual robots,
when two robots are constrained such that the action of one limits the actions of the other
(as in the case of a tether of finite length),
then it is less clear that the discrete structure of the RVG encodes an optimal solution.
One might conceive, in the two robot setting, one robot deviating from visibility edges in order to enable the other to move.
A main result of the paper (Theorem~\ref{thm:dist-optimality}), and the basis for the algorithm we present, establishes that no such deviations are necessary. 
Indeed, the RVG still suffices to find optimal solutions.
More formally, let $\Pi_a$ be the set of all paths from $\pt{r_a}$ to $\pt{d_a}$ and let $\Pi_b$ be the set of all paths from $\pt{r_b}$ to $\pt{d_b}$ in the RVG.
Proof of Theorem~\ref{thm:dist-optimality},
the fact that $(\tau^\star_a, \tau^\star_b) \in \Pi_a \times \Pi_b$,
is via several steps. 
The largest single intermediate step is in establishing Lemma~\ref{thm:rvg-paths-are-convex}, on the following page,
but it requires a few definitions first. 

To start, in what follows,  we can think of an always taut tether.
This is without loss of generality because:
(1) tightening a tether is a continuous operation, so it preserves homotopy;
(2) if the cable length constraint is satisfied for a taut tether, it must be for others as well.
The reader should bear in mind that the taut tether is merely a special representative of the homotopy class of tethers. It is helpful to have an operator to give this representative:

\begin{definition}[Tightening Operator]\label{def:tightening-op}
    Given a path $\alpha: \unitInterval \to \freespace$, we define the operator $\hat{\cdot}: (\unitInterval \to \freespace) \to (\unitInterval \to \freespace)$
    such that $\hat{\alpha}$ is the (unique) shortest path in the homotopy class of~$\alpha$.
\end{definition}

Practically, the classical algorithm of \citet{Hershberger1994} is used to obtain the shortest path homotopic to a given path.
The tightening operator will be used, in what follows, on paths and also on cable configurations. For the latter,  the following definition helps.

\begin{definition}[Connecting Paths via a Cable]\label{def:cat}
    Given two paths $\tau_1: \unitInterval \to \freespace$ and $\tau_2: \unitInterval \to \freespace$, and a cable configuration $c: \unitInterval \to \freespace$,
    we define a function that concatenates $\tau_1$, $\tau_2$, and $c$.
    Let
    $$
        \cat{s}{t}{\tau_1}{\tau_2}{c} \defeq 
        \begin{cases}
            \tau_1(t - 3 s t) &\quad 0 \leq s \leq \frac{1}{3} \\
            c(3(s - \frac{1}{3})) &\quad \frac{1}{3} < s < \frac{2}{3} \\
            \tau_2(3(s - \frac{2}{3})t) &\quad \frac{2}{3} \leq s \\
        \end{cases}\text{.}
    $$
    Then, for any fixed $t \in \unitInterval$, argument 
    $s \in \unitInterval$ parameterizes a new curve comprising those portions of $\tau_1$ and $\tau_2$ up to $t$ connected via $c$.
\end{definition}

Since both $s$ and $t$ are curve parameters, the definition of $\operatorname{cat}$ can appear abstruse at first.
Fig.~\ref{fig:cat} helps by illustrating how three given paths are joined via $\operatorname{cat}$ .

Definition~\ref{def:problem}, describing the planning problem, requires paths for which a feasible trajectory exists.
The connection between paths and trajectories is, of course, a timing.
Thus we need the following concept:

\begin{definition}[Re-parameterization]\label{def:reTiming}
\\A \emph{re-parameterization} is a monotonically increasing, continuous function 
$r: \unitInterval  \to \unitInterval$  with $r(0) = 0$ and $r(1) = 1$.
A pair $(r_1, r_2$) is a \emph{re-parameterization pair} if both functions, $r_1$ and $r_2$, are re-parameterizations.
\end{definition}

With the preceding scaffolding, we can next give a definition that is valuable in helping to identify the existence of a feasible trajectory given paths.
The notation $\circ$ is for function composition, i.e.,
$\tau_1 \circ r_1(t) = \tau_1(r_1(t))$ and $\tau_2 \circ r_2(t) = \tau_2(r_2(t))$.

\begin{definition}\label{def:cStar}
    Given two paths $\tau_1: \unitInterval \to \freespace$ and $\tau_2: \unitInterval \to \freespace$, and a cable configuration $c: \unitInterval \to \freespace$,
    let
    \begin{align*}
        &\cStar(\tau_1, \tau_2, c) \defeq \\
        &\min_{\substack{(r_1,r_2)~\text{over all} \\ \text{re-parameterization pairs}}}\left(\max_{t \in \unitInterval}\,\length{{\cathat{t}{\tau_1 \circ r_1}{\tau_2 \circ r_2}{c}}}\right).    
    \end{align*}
\end{definition}

Informally, $\cStar$ gives the shortest cable that permits one to execute $\tau_1$ and $\tau_2$.
The definition of $\cStar$ resembles the \frechet~distance~\citep{Ewing1969}.
The key difference is that $\cStar$ takes in a cable configuration $c$ and obtains the measure subject to the path connecting the two curves
being constrained to a particular homotopy class.
That homotopy class is prescribed via cable configuration $c$. (See also discussion in Section~\ref{sec:cspace-structure}.)

The intuition which connects with the main result is that any solution $(\tau_a, \tau_b)$ to a given \trpmpp must have a $\cStar(\tau_a, \tau_b, c_0) \leq \ell$.
We would like to establish that any optimal solution to a given \trpmpp
can be related to some solution on the RVG (\textit{i.e.}, in $\Pi_a \times \Pi_b$) that,
while being distance optimal, also abides by the conditions imposed by cable.
Corollary~\ref{thm:dist-optimality} states this formally, but we need the following lemmas to get there. 

\begin{figure*}[t]
    \centering
    \begin{subfigure}[t]{0.47\textwidth} \centering
        \includegraphics[height=3.2cm]{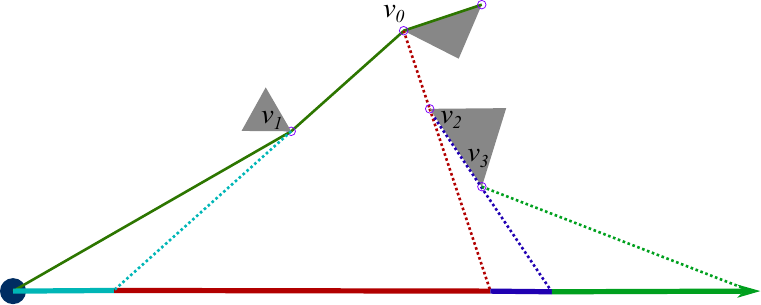}
        \caption{
            A scenario illustrating the cable consumption function.
            Originally the cable is in contact with $v_0$ and $v_1$.
            As the robot travels towards its destination (to the right), it will release contact with $v_1$. It will then make contact with $v_2$ and $v_3$, respectively.
            The dotted lines represent places in which cable events occur.
        }\label{fig:convex-scenario}
    \end{subfigure}\hfill%
    \begin{subfigure}[t]{0.22\textwidth}
        \begin{tikzpicture}[
            declare function={
                f0(\x) = and(\x>-30, \x<=-10+15) * ((4 * sqrt(2)) + sqrt(10) + sqrt(6^2 + (4 + \x - 15)^2));
                f1(\x) = and(\x>-10+15, \x<=(10/3)+15) * (sqrt(10) + sqrt(10^2+(\x - 15)^2));
                f2(\x) = and(\x>(10/3)+15, \x<=(17/3)+15) * (sqrt(10) + sqrt(10) + sqrt(7^2 + (\x - 1 - 15)^2));
                f3(\x) = (\x>(17/3)+15) * (sqrt(10) + sqrt(10) + sqrt(13) + sqrt(4^2+(\x - 3 - 15)^2));
            }
        ]
            \begin{axis}[
                axis lines=middle,
                xlabel={Curve parameter}, xmin=0, xmax=32,
                ylabel={Consumed cable}, ymin=0,
                ylabel style={rotate=90, at={(-0.15,0.5)}},
                ticks=none,
                domain=0:30,
                height=4.5cm
            ]
                \addplot[color=cyan, domain=0:5, samples=20, thick]{f0(x)};
                \draw [cyan,thick,densely dotted] (5,0) -- (5,17.3);
                \addplot[color=red, domain=5.001:18.333, samples=20, thick]{f1(x)};
                \draw [red,thick,densely dotted] (18.333,0) -- (18.333,13.7);
                \addplot[color=blue, domain=18.334:20.666, samples=20, thick]{f2(x)};
                \draw [blue,thick,densely dotted] (20.666,0) -- (20.666,14.7);
                \addplot[color=black!30!green, domain=20.667:30, samples=20, thick]{f3(x)};
                \draw [black!30!green,thick,densely dotted] (30,0) -- (30,22.6);
            \end{axis}
        \end{tikzpicture}
        \caption{Length of consumed cabled as a function of curve parameter.}\label{fig:convex-cable}
    \end{subfigure}\hfill%
    \begin{subfigure}[t]{0.22\textwidth}
        \begin{tikzpicture}[
            declare function={
                f0(\x) = and(\x>-30, \x<=-10+15) * ((\x - 11) / sqrt(6^2 + (4 + \x - 15)^2));
                f1(\x) = and(\x>-10+15, \x<=(10/3)+15) * ((\x - 15) / sqrt(10^2+(\x - 15)^2));
                f2(\x) = and(\x>(10/3)+15, \x<=(17/3)+15) * ((\x - 16) / sqrt(7^2 + (\x - 1 - 15)^2));
                f3(\x) = (\x>(17/3)+15) * ((\x - 18) / sqrt(4^2+(\x - 3 - 15)^2));
            }
        ]
            \begin{axis}[
                axis lines=middle,
                xlabel={Curve parameter}, xmin=0, xmax=32,
                ylabel={Derivative},
                ylabel style={rotate=90, at={(-0.15,0.5)}},
                ticks=none,
                domain=0:30,
                height=4.5cm
            ]
                \addplot[color=cyan, domain=0:5, samples=20, thick]{f0(x)};
                \draw [cyan,thick,densely dotted] (5,-0.7) -- (5,0);
                \addplot[color=red, domain=5.001:18.333, samples=20, thick]{f1(x)};
                \draw [red,thick,densely dotted] (18.333,0) -- (18.333,0.3);
                \addplot[color=blue, domain=18.334:20.666, samples=20, thick]{f2(x)};
                \draw [blue,thick,densely dotted] (20.666,0) -- (20.666,0.55);
                \addplot[color=black!30!green, domain=20.667:30, samples=20, thick]{f3(x)};
                \draw [black!30!green,thick,densely dotted] (30,0) -- (30,0.95);
            \end{axis}
        \end{tikzpicture}
        \caption{Derivative of length of consumed cabled per change in curve parameter.}\label{fig:convex-derivative}
    \end{subfigure}
    \caption{
        An illustrative example showing how, for a straight line motion, 
        the cable consumption is a convex function.
        As the robot (in Fig.~\ref{fig:convex-scenario}) makes its way towards the destination,
        the cable will make or release contact with elements of $\verts{O}$.
        These events change the derivative of the consumed cable.
        Color coding of continuous motion helps show how the function's pieces are separated by the occurrence of discrete cable events.
        Fig.~\ref{fig:convex-cable} and \ref{fig:convex-derivative} show the function and its derivative, respectively.
    }\label{fig:convexity}
\end{figure*}

\begin{lemma}\label{thm:cstar-is-less-than-ell}
    Path pair $(\tau_a, \tau_b)$ is a solution to a given
    \trpmpp, iff $\cStar(\tau_a, \tau_b, c_0) \leq  \ell$.
\end{lemma}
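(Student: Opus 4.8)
The lemma asserts an equivalence, so I would prove the two implications separately. First a word on reading the statement: unpacking Definitions~\ref{def:problem} and~\ref{def:wfeas}, ``$(\tau_a,\tau_b)$ is a solution'' means there is a continuous family $c\colon\unitInterval\times\unitInterval\to\freespace$ with $c(s,0)=\tau_a(s)$, $c(s,1)=\tau_b(s)$, $\length{c_s}\le\ell$ for all $s$, and --- as is forced by the problem instance though left tacit in Definition~\ref{def:wfeas} --- with $c(0,\cdot)$ in the homotopy class of the prescribed initial cable $c_0$ (the physical cable cannot start anywhere else). I would also say explicitly that a ``solution'' is regarded modulo independent re-parameterization of the two paths: whether a coordinated execution of two routes exists is unchanged if one re-times either route, and it is exactly this timing freedom that $\cStar$ (Definition~\ref{def:cStar}) minimizes over.

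\smallskip\noindent\emph{If $(\tau_a,\tau_b)$ is a solution, then $\cStar(\tau_a,\tau_b,c_0)\le\ell$.} Let $c$ be a \wfeas as above and fix $s\in\unitInterval$. Restrict $c$ to the rectangle $[0,s]\times\unitInterval$: this is a homotopy whose two ``side walls'' trace $\tau_a|_{[0,s]}$ and $\tau_b|_{[0,s]}$ and whose two ``end walls'' are $c(0,\cdot)\simeq c_0$ and $c_s$. Reading this homotopy around the boundary of the rectangle shows that $c_s$ is homotopic, rel its endpoints $\tau_a(s)$ and $\tau_b(s)$, to the concatenation of $\tau_a|_{[0,s]}$ reversed, then $c_0$, then $\tau_b|_{[0,s]}$ --- which is precisely the curve $\cat{\cdot}{s}{\tau_a}{\tau_b}{c_0}$ of Definition~\ref{def:cat}. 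Because the tightening operator (Definition~\ref{def:tightening-op}) returns the unique shortest representative of a homotopy class, $\length{\cathat{s}{\tau_a}{\tau_b}{c_0}}=\length{\widehat{c_s}}\le\length{c_s}\le\ell$. Since $s$ was arbitrary, taking the maximum over $s$ and then specializing the re-parameterization pair in Definition~\ref{def:cStar} to $(\mathrm{id},\mathrm{id})$ yields $\cStar(\tau_a,\tau_b,c_0)\le\ell$.

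\smallskip\noindent\emph{If $\cStar(\tau_a,\tau_b,c_0)\le\ell$, then $(\tau_a,\tau_b)$ is a solution.} Let $(r_1,r_2)$ attain the minimum in Definition~\ref{def:cStar} (attainment follows by a compactness argument of the kind standard for the \frechet~distance), set $\sigma_a=\tau_a\circ r_1$ and $\sigma_b=\tau_b\circ r_2$, and define $c(t,x)\defeq\cathat{t}{\sigma_a}{\sigma_b}{c_0}(x)$. I would then check the three conditions of Definition~\ref{def:wfeas}. For {\sc c-i}: reading the piecewise formula, $\cat{\cdot}{t}{\sigma_a}{\sigma_b}{c_0}$ starts at $\sigma_a(t)$ and ends at $\sigma_b(t)$, and tightening preserves endpoints, so $c(t,0)=\sigma_a(t)$ and $c(t,1)=\sigma_b(t)$; in particular $c(0,\cdot)=\widehat{c_0}\simeq c_0$ and (by {\sc c-i} at $t=1$) $c(1,\cdot)$ joins $\pt{d_a}$ to $\pt{d_b}$, so the instance's initial and goal conditions are met. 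For {\sc c-ii}: $\length{c_t}=\length{\cathat{t}{\sigma_a}{\sigma_b}{c_0}}\le\max_{t'\in\unitInterval}\length{\cathat{t'}{\sigma_a}{\sigma_b}{c_0}}=\cStar(\tau_a,\tau_b,c_0)\le\ell$. For {\sc c-iii}: $t\mapsto\cat{\cdot}{t}{\sigma_a}{\sigma_b}{c_0}$ is continuous into the space of curves (immediate from the formula and continuity of $\sigma_a$, $\sigma_b$, $c_0$), and the tightened-path map is continuous (this is what makes the funnel-type computation of \citet{Hershberger1994} well behaved, and amounts to the taut cable moving continuously as the robots move), so $c$ is jointly continuous. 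Hence $c$ is a \wfeas for $(\sigma_a,\sigma_b)$, which has the required endpoints and is a re-parameterization of $(\tau_a,\tau_b)$; so $(\tau_a,\tau_b)$ is a solution.

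\smallskip The substantive step is the homotopy-class identification in the first implication: recognizing the \wfeas restricted to $[0,s]\times\unitInterval$ as a homotopy with moving endpoints, and reading its boundary to place $c_s$ in the same class as $\cat{\cdot}{s}{\tau_a}{\tau_b}{c_0}$. Everything else --- that tightening stays in the class while only shortening, that the minimizing re-parameterization pair exists, that tightening is continuous --- is standard and I would cite rather than reprove it. I would, however, be careful to flag the two conventions the definitions leave implicit: that a \wfeas for an instance must begin in the homotopy class of $c_0$, and that solutions are identified up to independent re-parameterization of the two routes.
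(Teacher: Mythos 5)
Your proof is correct, and it is considerably more complete than the one the paper actually gives. The paper's proof is a single sentence: assume $\cStar(\tau_a,\tau_b,c_0)>\ell$ and derive a contradiction with {\sc c-ii}. That sketch covers only the forward implication (solution $\Rightarrow$ $\cStar\le\ell$), and even there it suppresses the one genuinely substantive step, which you supply explicitly: restricting the \wfeas to $[0,s]\times\unitInterval$ and reading the boundary of that square to identify the homotopy class of $c_s$ with that of $\cat{\cdot}{s}{\tau_a}{\tau_b}{c_0}$, so that tightening gives $\length{\cathat{s}{\tau_a}{\tau_b}{c_0}}\le\length{c_s}\le\ell$. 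Your converse direction --- building the \wfeas as $c(t,x)=\cathat{t}{\sigma_a}{\sigma_b}{c_0}(x)$ for the minimizing re-parameterization pair and verifying {\sc c-i}--{\sc c-iii} --- is not addressed in the paper's proof at all, yet it is needed, since the lemma is stated as an equivalence and the converse is what later results implicitly rely on. You are also right to flag the two conventions the definitions leave tacit (that the \wfeas must begin in the homotopy class of $c_0$, and that solutions are identified up to independent re-parameterization of the two routes); without the first the forward implication is false as literally stated, and without the second the converse only yields a \wfeas for a re-timed pair rather than for $(\tau_a,\tau_b)$ itself. The only points where you lean on cited facts rather than argument --- attainment of the minimum over re-parameterization pairs and continuity of the tightening map --- are standard and acceptable to invoke, though note that attainment genuinely matters for your converse and deserves the explicit mention you give it.
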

\begin{proof}
    For a contradiction assume $\cStar(\tau_a, \tau_b, c_0) > \ell$
    and reach a contradiction with \textsc{c-ii} 
    in Definition~\ref{def:wfeas}.
\end{proof}

The following lemma establishes an important fact about consumption of cable as the robots move on short paths. 
The result states that the consumption is a convex function, which leads to a practical way to verify that a pair of paths do not violate the limits imposed by the bounded cable, i.e., it suffices to check the lengths at the two end points.

\begin{lemma}\label{thm:rvg-paths-are-convex}
    Let $\widehat{\tau}_a$ and $\widehat{\tau}_b$, being the shortest paths in their respective homotopy classes, be the prescribed paths for robots $a$ and $b$.
    Then the length of the consumed cable is a convex function, if $\widehat{\tau}_a$ and $\widehat{\tau}_b$ use the same curve parameter.
\end{lemma}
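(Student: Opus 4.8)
Write $L(t) \defeq \length{\cathat{t}{\widehat{\tau}_a}{\widehat{\tau}_b}{c_0}}$ for the length of the taut cable at the moment both robots are simultaneously at parameter $t$; this quantity is ``the consumed cable'' (the length functional of Definitions~\ref{def:cat} and~\ref{def:cStar}), and the claim is that $L:\unitInterval\to\mathbb{R}$ is convex. I would parameterize each of $\widehat{\tau}_a,\widehat{\tau}_b$ proportionally to its own arc length, so that each robot moves at a constant speed $s_a$ (resp.\ $s_b$) and its position is an \emph{affine} function of $t$ on any sub-interval over which it stays on a single straight edge; ``using the same curve parameter'' then means both robots are indexed by the same $t\in\unitInterval$.

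First I would pin down the combinatorial structure of the cable. By Corollary~\ref{thm:shortest-path-on-rvg}, any shortest path in a homotopy class is a finite polygonal line with vertices in $\verts{O}$; applying this to $\widehat{\tau}_a$, to $\widehat{\tau}_b$, and to the taut cable for each fixed $t$, the cable at parameter $t$ is a polygonal path $\widehat{\tau}_a(t)=w_0,w_1,\dots,w_k,w_{k+1}=\widehat{\tau}_b(t)$ with $w_1,\dots,w_k\in\verts{O}$. The key observation is that the interior chain $w_1\cdots w_k$ is made of \emph{fixed} segments joining obstacle vertices; only the first link $[\widehat{\tau}_a(t),w_1]$ and the last link $[w_k,\widehat{\tau}_b(t)]$ vary with $t$, and the vertex sequence changes only at finitely many \emph{cable events}, i.e.\ values of $t$ at which a vertex is deposited onto, or released from, one of the two ends of the cable (Fig.~\ref{fig:convexity}). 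I would then subdivide $\unitInterval$ at $0=t_0<t_1<\dots<t_N=1$, the $t_i$ being the breakpoints of $\widehat{\tau}_a$, the breakpoints of $\widehat{\tau}_b$, and the cable events.

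On each closed piece $[t_i,t_{i+1}]$ the vertex sequence is a fixed $(w_1,\dots,w_k)$ and each robot stays on one edge, so
\[
  L(t)=\bigl\|\widehat{\tau}_a(t)-w_1\bigr\|+\Bigl(\textstyle\sum_{j=1}^{k-1}\|w_j-w_{j+1}\|\Bigr)+\bigl\|w_k-\widehat{\tau}_b(t)\bigr\|
\]
(with the obvious conventions when $k\le 1$, e.g.\ $L(t)=\|\widehat{\tau}_a(t)-\widehat{\tau}_b(t)\|$ if $k=0$). The middle sum is constant, and $\widehat{\tau}_a(t),\widehat{\tau}_b(t)$ are affine in $t$ and never coincide with the fixed wrapped vertices, so $L$ is a sum of Euclidean norms of affine functions plus a constant, hence smooth and \emph{convex} on $[t_i,t_{i+1}]$. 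Since $L$ is manifestly continuous on all of $\unitInterval$ (the taut length depends continuously on the endpoints), a continuous piecewise-convex function is convex on $\unitInterval$ as soon as its one-sided derivatives satisfy $L'(t_i^-)\le L'(t_i^+)$ at every $t_i$. That inequality is the entire content of the lemma.

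This last step is where the geometry does the work, and I expect it to be the main obstacle. On either side of $t_i$ the only $t$-dependent term affected is a first/last link, say $\|\widehat{\tau}_a(t)-w_1(t)\|$, whose one-sided derivative equals $s_a\cos\theta^{\pm}$, where $\theta^{\pm}$ is the angle between robot $a$'s velocity and the direction from $w_1^{\pm}$ to $\widehat{\tau}_a(t_i)$. Three cases arise. (i) If $t_i$ is a cable event not located at a path breakpoint, the velocity is unchanged and the three points $\widehat{\tau}_a(t_i),w_1^-,w_1^+$ are colinear (the deposited vertex lies on the old link, or conversely), so $\theta^-=\theta^+$ and the derivative is actually continuous there (matching Fig.~\ref{fig:convex-derivative}). (ii) If robot $a$ passes \emph{through} a vertex $v\in\verts{O}$ at $t_i$ and the cable deposits (resp.\ releases) $v$ there, then on the post- (resp.\ pre-) side the robot moves directly away from (resp.\ toward) $v$, pinning that one-sided derivative to $+s_a$ (resp.\ $-s_a$)---an extreme value---so $L'(t_i^-)\le L'(t_i^+)$ follows from $|s_a\cos\theta|\le s_a$. (iii) If robot $a$ bends at $v$ but the cable does not touch $v$, then, because $\widehat{\tau}_a$ is a \emph{shortest} path, it hugs the obstacle cornered at $v$; the first cable link lies in $\freespace$ and so cannot cross that obstacle, which forces $w_1$ onto the free-space (outside-the-bend) side of $v$, and a short planar computation then shows the turn in the velocity can only raise $\cos\theta$. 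Breakpoints that coincide with an extra deposit/release combine these. Hence $L'$ is non-decreasing, so $L$ is convex, and the lemma follows. (A less elementary route that sidesteps the casework: lift $\widehat{\tau}_a,\widehat{\tau}_b$ to geodesics in the cover of $\freespace$ that unwinds the cable's homotopy class---a non-positively curved, piecewise-Euclidean, $\mathrm{CAT}(0)$ surface---and invoke convexity of the metric along pairs of geodesics there.)
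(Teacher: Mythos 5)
Your skeleton is the paper's: subdivide $\unitInterval$ at path breakpoints and cable events, prove convexity on each closed piece, and then check that the one-sided derivatives can only jump upward at the subdivision points. Your within-piece argument (a constant plus Euclidean norms of affine functions of $t$) is cleaner than the paper's, and your cases (i) and (ii) are sound --- (ii) is exactly the paper's \textbf{F} and \textbf{L} segments, whose contributions are pinned to $\mp s_a$ and so cannot produce a downward jump at the instant they end or begin. The gap is case (iii), which you correctly flag as ``where the geometry does the work'' and then do not do. Your stated justification fails on two counts. First, for the robot's path to turn at $v$ the obstacle wedge at $v$ need only \emph{intersect} the inside-of-the-bend sector, not fill it, so ``the first cable link lies in $\freespace$'' does not by itself confine the direction to $w_1$ at the single instant $t_i$. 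Second, and more damningly, even granting that $w_1$ lies strictly on the outside of the bend, the conclusion is false: put the turn vertex at the origin with incoming direction $0$ and outgoing direction $\alpha\in(0,\pi)$, and put $w_1$ at angle $\beta$ with $\alpha/2<\beta<\alpha$ (outside the bend by your criterion). The one-sided derivatives of the first-link length are $-s_a\cos\beta$ before and $-s_a\cos(\alpha-\beta)$ after, and $\cos(\alpha-\beta)>\cos\beta$, so the derivative jumps \emph{down}. The limiting instance $\beta=\alpha$ is the paper's ``\textbf{O} segment followed by an \textbf{F} segment,'' where $\cos\theta^{+}=-1$ is the minimum possible value --- flatly contradicting ``the turn can only raise $\cos\theta$.''

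What actually closes this case --- and what the paper is doing with its $\textbf{F}^{*}\textbf{O}^{*}\textbf{L}^{*}$ structure lemma and its wrapping/corner-shaving dichotomy --- is a \emph{two-sided} feasibility condition that you never invoke: the first cable link must avoid the obstacle cornered at $v$ not only at $t_i$ but as the robot \emph{arrives along the incoming edge} and as it \emph{departs along the outgoing edge}. As the robot closes on $v$, the link from the robot to $w_1$ sweeps arbitrarily close to $v$ across the whole angular interval between $w_1$'s direction and the backward incoming direction; requiring it to clear the obstacle wedge on the approach, and again symmetrically on the departure, confines $w_1$ to the sector between the forward extension of the incoming edge and the backward extension of the outgoing edge. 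In that smaller sector a direct computation does give $\cos\theta^{-}\le\cos\theta^{+}$ (the direction from $w_1$ to $v$ makes an acute angle with the turn bisector $u^{+}-u^{-}$). Any other location of $w_1$ forces the cable to catch on $v$ itself at the turn --- i.e.\ a deposit/release event at $v$, which is your case (ii) and the paper's transition into an \textbf{L} (or out of an \textbf{F}) segment. Without this two-sided argument, or some substitute for the paper's exclusion of \textbf{O}-followed-by-\textbf{F} adjacencies, case (iii) is an assertion rather than a proof, and it is precisely the step on which the lemma turns. (Your parenthetical CAT(0) route is a plausible way to bypass the casework entirely, but as offered it carries its own unproved obligations: that the relevant cover of $\freespace$ is nonpositively curved and that both robot paths lift to geodesics of the \emph{same} cover determined by $c_0$.)
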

\begin{proof}
    Because both $\widehat{\tau}_a$ and $\widehat{\tau}_b$ are the shortest paths in their respective homotopy classes,
    we can assert that they comprise sequences of pairwise connected straight line motions which lie on the RVG edges (following Corollary~\ref{thm:shortest-path-on-rvg}).
    Moreover, each straight line motion is of one of the following type:
    \begin{itemize}
        \item[\textbf{F}:] the motion is on the same edge as the cable and is [F]ollowing the cable, or
        \item[\textbf{L}:] the motion is on the same edge as the cable and is [L]eading the cable, or
        \item[\textbf{O}:] is any [O]ther straight line motion.
    \end{itemize}
    
    We will argue that the length of cable consumed as a robot moves along such trajectories is a convex function by showing that the gradient of the function is monotonically increasing.
    A technical difficulty with cable consumption is that it is a continuous but only piece-wise differentiable function.
    In circumstances where the derivative is undefined, we take the value of the derivative from the right.
    These circumstances arise when contacts between the cable and obstacles are made or broken. 
    The cable consumption is a function of two curve parameters:
    so by derivative we are referring to the partials with respect to each parameter---one for each robot.
    
\begin{figure*}[t]
    \centering
    \begin{subfigure}[t]{\halfpage}
        \centering
        \includegraphics[scale=1.8]{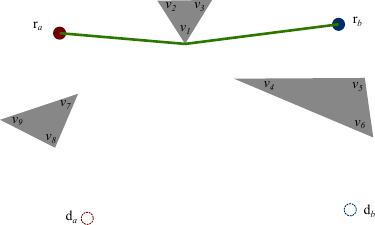}
        \caption{The original configuration of the cable.}
    \end{subfigure} \hfill
    \begin{subfigure}[t]{\halfpage}
        \centering
        \includegraphics[scale=1.8]{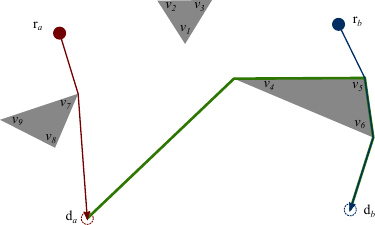}
        \caption{The optimal solution to the given \trpmpp which is found by searching the search tree.}
    \end{subfigure}
    \caption{An example scenario for which we illustrate part of the search tree in Fig.~\ref{fig:planning-tree-details}}
    \label{fig:planning-tree}
\end{figure*}

    When the robot is moving on an \textbf{F} (or \textbf{L}) segment the derivative is $-1$ (or $1$, respectively).
    For individual \textbf{O} segments the function might have multiple pieces.
    For each piece of each such segment, the pieces are regions where the points that the cable contacts are unchanged.
    Then, with a single robot moving, only the cable between the robot and the closest contact point contributes to the change in cable consumption. 
    Writing the consumption as a function of the curve parameter and simply taking its derivative results in a monotonically increasing function
    whose value is in the $(-1, 1)$ interval.
    When unwinding around an object the derivative is negative but increasing;
    when breaking contact, the radius to the closest object increases, and so does the derivative.
    When making or maintaining contact, the derivative is positive, but still increasing. It is, thus, always increasing.
    (Fig.~\ref{fig:convexity} illustrates this fact using an example.)
    
    Next, we argue that $\widehat{\tau}_a$ and $\widehat{\tau}_b$ have a specific structure:
    each trajectory is of the form $\textbf{F}^*\textbf{O}^*\textbf{L}^*$ (where we have used regular expression notation with Kleene stars).
    Showing the following suffices:
    \begin{enumerate}[(a)]
        \item no \textbf{O} segment is followed by an \textbf{F} segment, and
        \item no \textbf{L} segment is followed by an \textbf{O} segment, and 
        \item  no \textbf{L} segment is followed by an \textbf{F} segment. 
    \end{enumerate}
   
    For proof of (a): Assume there exists a motion,
    $\widehat{\tau}_1$, in which segment $A$ of type \textbf{O} ending at vertex $v$
    is followed by an \textbf{F} segment, starting at $v$.
    As we may assume the cable is always taut, we can assert that it lies on the shortest path to $v$.
    Thus, replacing $A$ with a segment/segments in which the robot follows the cable to arrive at $v$ yields a shorter path than $\widehat{\tau}_1$.
    This is a contradiction as $\widehat{\tau}_1$ is the shortest path in its homotopy class.
    
    Similarly, proof of (b): Suppose there exists a motion,
    $\widehat{\tau}_2$, in which an \textbf{L} segment ending at vertex $v$
    is followed by segment $B$ of type \textbf{O} starting at $v$ and ending at $v'$.
    Given that the robot is leading the cable and that we may assume the cable is always taut, the cable lies on the shortest path from $v$ to $v'$.
    Again, replacing $B$ with a segment/segments in which the robot follows the path of a taut cable to arrive at $v'$ will be shorter than $\widehat{\tau}_2$.
    But then $\widehat{\tau}_2$ can't have minimal length within its homotopy class.

    The proof for the impossibility of (c) is as follows: Tautening such a cable leads to a shorter, pure \textbf{F} segment. 
    The \textbf{L} segment provided excess length; and its removal
    reduces the distance along which the robots need to move. 
    But this contradicts the fact that the paths are the shortest in their respective homotopy classes.

    Taken together, this proves that $\widehat{\tau}_a$ and $\widehat{\tau}_b$ are of the form $\textbf{F}^*\textbf{O}^*\textbf{L}^*$.
    Hence, global monotonicity of the partial derivative of the cable consumption function holds if we can show that monotonicity is preserved between two consecutive \textbf{O} segments.
    To show no violation occurs at the transition between segments, monotonicity in a small open interval around the transition is sufficient.
    If the two \textbf{O}-segments, $g_1$ and $g_2$,  are collinear,
    then they could be treated as a single segment and the prior argument for a single \textbf{O}-segment holds. 
    Hence, there must be a `turn' from segment $g_1$ to $g_2$.
    Both segments are on the RVG so that turn occurs at a vertex $v_o$ of some obstacle.
    Presume that we extend and continue along $g_1$ an extra $\epsilon>0$;
    then the derivative of the cable consumption continues to increase (as the single segment argument holds).
    Since $g_2$ is not along this little extension, it falls to one side.
    If that side is away from the cable-obstacle contact,
    then the additional motion away consumes extra cable,
    so the derivative only increases faster.
    Otherwise, when turning towards the contact, monotonicity may indeed fail.
    However, such a turn leads to a contradiction;
    two cases are possible: the obstacle to which $v_o$ belongs is on the inside of the turn, or it is on the outside.
    If it is on the inside, then the cable itself wraps around $v_o$,
    in which case $g_2$ is not an \textbf{O} segment (but an \textbf{L} one).
    If the obstacle is on the outside, then simply shaving off a small corner at the turn is feasible.
    But that is shorter, contradicting the supposition that $g_1$ and $g_2$ result from shortest motions in their homotopy class.
    
    Hence monotonicity of the partial derivative of the cable consumption holds across the entire $\unitInterval$.
    Now consider the concurrent motion of both robots:
    we feed them the same curve parameter and  the derivative of the cable consumption simply becomes the total derivative.
    Being the sum of two partials, each of which is monotone, gives a monotone function.
    Thus, the total cable consumption is a convex upward function of the curve parameter.
\end{proof}

The preceding does the heavy lifting for the proof of our theorem.

\begin{theorem}\label{thm:rvg-related-solution}
    If $(\tau_a, \tau_b)$ is a solution for a given \trpmpp, then
        $\exists (\tau^{VG}_a, \tau^{VG}_b) \in \Pi_a \times \Pi_b:
        \cStar(\tau^{VG}_a, \tau^{VG}_b, c_0) \leq  \ell$.
\end{theorem}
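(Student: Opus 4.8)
The plan is to obtain the required visibility-graph solution by \emph{tightening} the given one. Set $\tau^{VG}_a \defeq \widehat{\tau}_a$ and $\tau^{VG}_b \defeq \widehat{\tau}_b$, the unique shortest paths homotopic to $\tau_a$ and $\tau_b$ (Definition~\ref{def:tightening-op}). Since a shortest path within a homotopy class is a concatenation of RVG edges (Corollary~\ref{thm:shortest-path-on-rvg}; cf.\ the opening of the proof of Lemma~\ref{thm:rvg-paths-are-convex}), we get $\widehat{\tau}_a \in \Pi_a$ and $\widehat{\tau}_b \in \Pi_b$, and it remains only to establish $\cStar(\widehat{\tau}_a, \widehat{\tau}_b, c_0) \le \ell$. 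Because $\cStar$ is a \emph{minimum} over re-parameterization pairs, it suffices to exhibit one such pair meeting the bound, and the natural candidate is the identity pair: drive both robots with a common curve parameter $t$. Writing $L(t) \defeq \length{\cathat{t}{\widehat{\tau}_a}{\widehat{\tau}_b}{c_0}}$ for the resulting cable-consumption profile, the goal reduces to $\max_{t \in \unitInterval} L(t) \le \ell$.

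This is precisely the setting of Lemma~\ref{thm:rvg-paths-are-convex}: $\widehat{\tau}_a$ and $\widehat{\tau}_b$ are shortest in their homotopy classes and are fed a common curve parameter, so $L$ is convex on $\unitInterval$, whence $\max_t L(t) = \max\bigl(L(0), L(1)\bigr)$. Thus only the two endpoints need bounding. At $t=0$ both robots sit at their starts, so $\cat{\cdot}{0}{\widehat{\tau}_a}{\widehat{\tau}_b}{c_0}$ is $c_0$ with two constant stubs prepended and appended; it is homotopic rel endpoints to $c_0$, hence $L(0) = \length{\widehat{c_0}} \le \length{c_0} \le \ell$.

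The substantive endpoint is $t=1$. Here $\widehat{\tau}_a(1) = \pt{d_a} = \tau_a(1)$ and $\widehat{\tau}_b(1) = \pt{d_b} = \tau_b(1)$, so $\cat{\cdot}{1}{\widehat{\tau}_a}{\widehat{\tau}_b}{c_0}$ and $\cat{\cdot}{1}{\tau_a}{\tau_b}{c_0}$ are both paths from $\pt{d_a}$ to $\pt{d_b}$, and since $\widehat{\tau}_a \simeq \tau_a$ and $\widehat{\tau}_b \simeq \tau_b$ rel their own endpoints, these two $\operatorname{cat}$-curves are homotopic rel $\{\pt{d_a}, \pt{d_b}\}$; hence they share a tightened representative, giving $L(1) = \length{\cathat{1}{\tau_a}{\tau_b}{c_0}}$. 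To bound the right-hand side I invoke Lemma~\ref{thm:cstar-is-less-than-ell}: the given solution has $\cStar(\tau_a, \tau_b, c_0) \le \ell$, so some re-parameterization pair $(r_1, r_2)$ attains $\max_t \length{\cathat{t}{\tau_a \circ r_1}{\tau_b \circ r_2}{c_0}} \le \ell$. Evaluating at $t=1$ and noting that a re-parameterization changes neither the (now endpoint-fixed) homotopy class nor the arc length of the $\operatorname{cat}$-curve at the terminal parameter, we get $\length{\cathat{1}{\tau_a}{\tau_b}{c_0}} \le \ell$, i.e.\ $L(1) \le \ell$.

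Putting the pieces together, $\cStar(\widehat{\tau}_a, \widehat{\tau}_b, c_0) \le \max_t L(t) = \max\bigl(L(0), L(1)\bigr) \le \ell$, which is the claim. I expect the main obstacle to be the bookkeeping around $t=1$: one must check (i) that Lemma~\ref{thm:rvg-paths-are-convex} is genuinely licensed here --- that ``using the same curve parameter'' is exactly the identity re-parameterization pair and that ``consumed cable'' is the quantity $\length{\widehat{\operatorname{cat}}(\cdot)}$ appearing in $\cStar$ --- and (ii) that the bound $\cStar(\tau_a, \tau_b, c_0) \le \ell$, which a priori only constrains \emph{some} timing, can be transferred to the concrete, unsynchronized cable configuration $\cathat{1}{\widehat{\tau}_a}{\widehat{\tau}_b}{c_0}$; this works only because at the terminal parameter the $\operatorname{cat}$-curve is independent, up to reparametrization (hence up to homotopy class and arc length), of the chosen timing, and because tightening the two robot paths does not change the homotopy class of the resulting final cable.
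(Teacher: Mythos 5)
Your proof is correct and follows essentially the same route as the paper's: tighten both paths to land on the RVG, invoke Lemma~\ref{thm:rvg-paths-are-convex} under the identity re-parameterization to reduce the maximum of the cable-consumption profile to its two endpoints, and bound those endpoints using homotopy-invariance of tightening together with the fact that $(\tau_a,\tau_b)$ is a solution. Your treatment of the $t=1$ endpoint via Lemma~\ref{thm:cstar-is-less-than-ell} is somewhat more explicit than the paper's (which simply asserts the endpoint bounds from the solution property), but it is the same argument.
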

\begin{proof}
    Let $\widehat{\tau}_a$ and $\widehat{\tau}_b$ be the shortest paths homotopic to $\tau_a$ and $\tau_b$, respectively.
    Hence, $(\widehat{\tau}_a, \widehat{\tau}_b) \in \Pi_a \times \Pi_b$ from Corollary~\ref{thm:shortest-path-on-rvg}.
    Then to prove this theorem it suffices to show that the following inequality holds:
    $$
    \cStar(\widehat{\tau}_a, \widehat{\tau}_b, c_0) \leq \ell \text{.}
    $$
    
    Since \,$\hat{\cdot}$\, is homotopy preserving, the following conditions hold:
    \begin{align*}
        &\length{{\cathat{0}{\widehat{\tau}_a}{\widehat{\tau}_b}{c_0}}} = \length{{\cathat{0}{\tau_a}{\tau_b}{c_0}}} = \length{\widehat{c}_0} \text{,}\\
        &\length{{\cathat{1}{\widehat{\tau}_a}{\widehat{\tau}_b}{c_0}}} = \length{{\cathat{1}{\tau_a}{\tau_b}{c_0}}}\text{.}
    \end{align*}
    Moreover, because $(\tau_a, \tau_b)$ is a solution we have\\
    $\length{{\cathat{0}{\widehat{\tau}_a}{\widehat{\tau}_b}{c_0}}} \leq \ell$, and\\
    $\length{{\cathat{1}{\widehat{\tau}_a}{\widehat{\tau}_b}{c_0}}} \leq \ell$.

    Following Lemma~\ref{thm:rvg-paths-are-convex},
    feeding the same curve parameter to both $\widehat{\tau}_a$ and $\widehat{\tau}_b$,
    gives a convex upward cable consumption.
    Therefore,
    \begin{align*}
        \max_{t \in \unitInterval}\,&\length{{\cathat{t}{\widehat{\tau}_a}{\widehat{\tau}_b}{c}}} =\\
        &\max_{t \in \{0, 1\}}\,\length{{\cathat{t}{\widehat{\tau}_a }{\widehat{\tau}_b}{c}}} \leq \ell \text{.}
    \end{align*}

    Because the above holds for the trivial identity re-parameterization, the minimum over the set of all re-parameterization pairs must be less than or equal to the above.
    This fact, when combined with Lemma~\ref{thm:cstar-is-less-than-ell}, completes the proof.
\end{proof}

\begin{corollary}[Distance Optimality]\label{thm:dist-optimality}
    Let $(\tau^\star_a, \tau^\star_b)$ be a distance optimal solution to a given \trpmpp. Then
    the RVG also contains a distance optimal solution, \\
    $(\tau^{\star VG}_a, \tau^{\star VG}_b)$.
\end{corollary}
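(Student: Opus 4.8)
The plan is to obtain the required RVG solution by simply tightening the given optimal solution and then arguing that tightening can only help. First I would let $\widehat{\tau}^\star_a$ and $\widehat{\tau}^\star_b$ denote the shortest paths homotopic to $\tau^\star_a$ and $\tau^\star_b$, respectively. Since the tightening operator of Definition~\ref{def:tightening-op} preserves endpoints and homotopy class, $\widehat{\tau}^\star_a$ is again a path from $\pt{r_a}$ to $\pt{d_a}$ and $\widehat{\tau}^\star_b$ a path from $\pt{r_b}$ to $\pt{d_b}$; and by Corollary~\ref{thm:shortest-path-on-rvg} each is a concatenation of RVG edges, so $(\widehat{\tau}^\star_a, \widehat{\tau}^\star_b) \in \Pi_a \times \Pi_b$.

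Next I would show this pair is actually a solution to the same \trpmpp. This is exactly where the earlier machinery pays off: applying Theorem~\ref{thm:rvg-related-solution} to the solution $(\tau^\star_a, \tau^\star_b)$ yields $\cStar(\widehat{\tau}^\star_a, \widehat{\tau}^\star_b, c_0) \le \ell$ (indeed the proof of that theorem constructs precisely the tightened pair), and then Lemma~\ref{thm:cstar-is-less-than-ell} converts this inequality back into the statement that $(\widehat{\tau}^\star_a, \widehat{\tau}^\star_b)$ satisfies Definition~\ref{def:problem}, i.e. it is a genuine solution.

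Then comes the optimality bookkeeping. By Definition~\ref{def:tightening-op} the tightened paths are no longer than the originals, $\length{\widehat{\tau}^\star_a} \le \length{\tau^\star_a}$ and $\length{\widehat{\tau}^\star_b} \le \length{\tau^\star_b}$, hence $\max\!\left(\length{\widehat{\tau}^\star_a}, \length{\widehat{\tau}^\star_b}\right) \le \max\!\left(\length{\tau^\star_a}, \length{\tau^\star_b}\right)$. Since $(\tau^\star_a, \tau^\star_b)$ is distance optimal over \emph{all} solutions and we have just exhibited $(\widehat{\tau}^\star_a, \widehat{\tau}^\star_b)$ as a solution, the reverse inequality holds too, forcing equality. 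Therefore $(\widehat{\tau}^\star_a, \widehat{\tau}^\star_b)$ is itself distance optimal, and taking $(\tau^{\star VG}_a, \tau^{\star VG}_b) \defeq (\widehat{\tau}^\star_a, \widehat{\tau}^\star_b)$ completes the argument.

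As for where the real difficulty lies: this corollary is essentially a wrapper, so there is no single hard step left here — all the heavy lifting was done in Lemma~\ref{thm:rvg-paths-are-convex} and Theorem~\ref{thm:rvg-related-solution}. The one point that deserves care is lining up the two senses of ``optimality'': distance optimality is defined through the \emph{max} of the two path lengths, so I must verify that tightening does not increase either length individually (it does not), rather than, say, only their sum; given that, the max cannot increase and the transfer of optimality is immediate.
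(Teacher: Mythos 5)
Your proof is correct and follows essentially the same route as the paper: take the tightened pair $(\widehat{\tau^\star_a}, \widehat{\tau^\star_b})$, note it lies on the RVG with no increase in either path length, and invoke Theorem~\ref{thm:rvg-related-solution} (with Lemma~\ref{thm:cstar-is-less-than-ell}) to confirm it remains a feasible solution. Your version is slightly more careful than the paper's in making explicit the conversion from the $\mathbb{C}^\star$ bound back to solutionhood and in checking that the \emph{max}-based optimality metric is preserved, but the underlying argument is identical.
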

\begin{proof}
    A pair $(\tau^{\star VG}_a, \tau^{\star VG}_b) \in
    \Pi_a \times \Pi_b$ is constructed from 
    $(\tau^\star_a, \tau^\star_b)$.
    Simply take 
    $(\tau^{\star VG}_a, \tau^{\star VG}_b) 
    =(\widehat{\tau^\star_a}, \widehat{\tau^\star_b})$ and
    observe the three following facts:
    \begin{itemize}
        \item $\widehat{\tau^\star_a}$ and $\widehat{\tau^\star_b}$ is on the RVG,
        \item the distance traveled along $\widehat{\tau^\star_a}$ and $\widehat{\tau^\star_b}$ for each robot is no worse following Corollary~\ref{thm:shortest-path-on-rvg},
        \item $\cStar(\widehat{\tau^\star_a}, \widehat{\tau^\star_b}, c_0) \leq \ell$ following Theorem~\ref{thm:rvg-related-solution}.
    \end{itemize}
\end{proof}

\section{The Efficient Planning Algorithm} \label{sec:planning-algorithm}

\begin{figure*}[t]
    \begin{minipage}{0.33\textwidth}
        \begin{subfigure}[t]{\textwidth}
            \centering
            \includegraphics[scale=1.0]{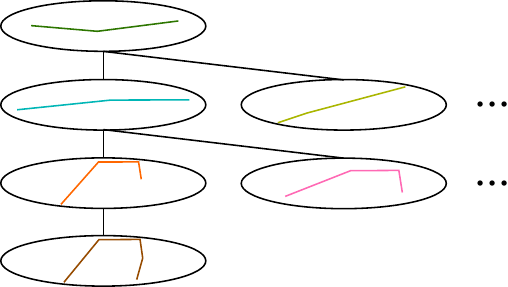}
            \caption{A visual representation of the search tree, with vertices  bearing labels that are iconic depictions of the cable configurations; see (b)--(d).}
        \end{subfigure}
    \end{minipage}
    \hfill
    \begin{minipage}{0.65\textwidth}
        \begin{subfigure}[t]{0.3\textwidth}
            \centering
            \includegraphics[scale=0.8]{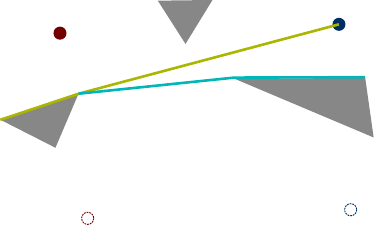}
            \caption{Two of the configurations that can be achieved from the configuration stored in the root node.}
        \end{subfigure}
        \hfill
        \begin{subfigure}[t]{0.3\textwidth}
            \centering
            \includegraphics[scale=0.8]{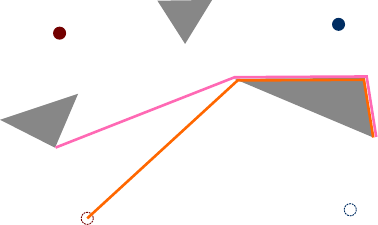}
            \caption{
                Both cables shown here can be achieved from the cyan one in the previous snapshot.
                Orange is preferred as it reduces the distance traveled by robot $a$.
            }
        \end{subfigure}
        \hfill
        \begin{subfigure}[t]{0.3\textwidth}
            \centering
            \includegraphics[scale=0.8]{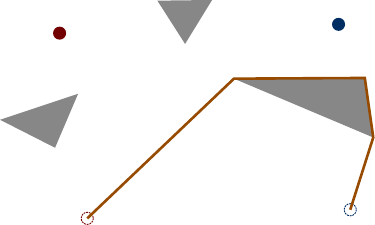}
            \caption{The final configuration to arrive at the destination node.}
        \end{subfigure}
    \end{minipage}
    \caption{
        A small portion of the search tree for the \trpmpp in Fig.~\ref{fig:planning-tree}.
        The optimal solution to the given \trpmpp in this figure can be obtained by traversing the node containing the brown cable configuration up to the root.
    }
    \label{fig:planning-tree-details}
\end{figure*}

In this section we present an implementation of \astar \citep{Hart1968-astar} to construct and explore the search tree for the solution to this problem.
Consider a \trpmpp with
\trpmppTuple.
The search tree's nodes represent cable configurations.
An edge between two nodes represents a motion for each robot along an RVG edge that abides by the cable requirements.
To do so, we define a data-structure with the following fields:
\begin{itemize}
    \item \emph{taut cable}: a list of RVG vertices each of which is visible from the previous ones describing a valid cable configuration,
    \item \emph{cost}: a pair of costs indicating the distance traveled by each robot to arrive at this cable configuration from their initial poses,
    \item a reference to a parent node.
\end{itemize}

Fig.~\ref{fig:planning-tree}
provides an example scenario, and
Fig.~\ref{fig:planning-tree-details} gives the structure of the search tree for this scenario.
Using \astar search, Algorithm~\ref{alg:search} explores the search tree\,---which gives a structured representation to the set $\Pi_a \times \Pi_b$---\,for an optimal solution to a given \trpmpp.
Being a type of informed search, \astar uses an estimated cost computed as the sum of a cost function and an heuristic function.
The cost is calculated cumulatively with a constant time operation when creating/updating a node.
When using an admissible heuristic function, we can safely terminate the search in a branch whose estimated cost is higher than the best solution found~\citep{Russell2010-artificial-intelligence}.
In Sec.~\ref{sec:disscuss-informed-search} we examine the effect of different admissible heuristics on the efficiency of the algorithm.

\begin{algorithm}
	\caption{\astar over Cable Configurations}
	\label{alg:search}
	\begin{algorithmic}[1]
		\STATE \textbf{Search}\trpmppTuple
        \STATE Build RVG, $g$, with vertices $\{\verts{O}, \pt{r_a},\pt{r_b}, \pt{d_a}, \pt{d_b}\}$
		\STATE root $=$ Node($\widehat{c}_0$, $\varnothing$)
		\STATE priorityQ $= \{ $ root $ \}$
		\WHILE{priorityQ is not empty}
            \STATE n = priorityQ.dequeue()
            \IF{n.cable.first $= \pt{d_a}$ \AND n.cable.last $= \pt{d_b}$}\label{alg:destination-condition-line}
                \IF{$\length{\text{n.cable}} \leq \ell$}\label{alg:termination-condition-line}
                    \RETURN constructed path by following parent references from~n up to the root
                \ELSE
                    \STATE \textbf{continue}
                \ENDIF
            \ENDIF
            \STATE $V_a = g.$visibleVerts$($n.cable.first$)$
            \STATE $V_b = g.$visibleVerts$($n.cable.last$)$
            \FORALL{$(v_a, v_b) \in V_a \times V_b$}
                \STATE $c_1 = [v_a] +$ n.cable $+ [v_b]$
                \STATE child $=$ Node($\widehat{c}_1$, n)
                \STATE priorityQ.enqueue(child)
            \ENDFOR
		\ENDWHILE
	\end{algorithmic}
\end{algorithm}

Given any node in the tree, we can obtain a pair of paths\,---one for each robot---\,that continuously transforms the original cable configuration to the configuration stored in the node.
To do so, we can traverse the tree from the root down to the node.
Start with $\pi_a$ and $\pi_b$ as two empty sequences.
While traversing the tree down to the node, for each node we take the first and last element of the taut tether and append it to the end of $\pi_a$ and $\pi_b$, respectively.
Notice that the sequences $\pi_a$ and $\pi_b$ will have equal cardinality.
In fact, we see that the \emph{taut cable} list at each vertex operates like a deque. Each robot, by making continuous motions
in the plane, can modify the cable. The important qualitative changes to the cable are cable events: wherein an obstacle vertex is added or deleted. The physical property of the cable means that each robot's motion can only push or pop at its respective end of the cable.

\begin{theorem}[Soundness]\label{thm:soundness}
    A pair of paths, $(\tau_a, \tau_b)$, generated by traversing the above mentioned search tree from the root to a leaf
    is a solution to the corresponding \trpmpp,
    if the leaf contains a taut tether configuration whose end points are $\pt{d_a}$ and $\pt{d_b}$.
\end{theorem}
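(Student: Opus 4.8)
The plan is to verify the conditions Definition~\ref{def:problem} places on a solution for the pair $(\tau_a,\tau_b)$ read off a root-to-leaf branch: that $\tau_a$ runs from $\pt{r_a}$ to $\pt{d_a}$, that $\tau_b$ runs from $\pt{r_b}$ to $\pt{d_b}$, and that there is a \wfeas for them. The endpoint part is essentially bookkeeping. I would note that the root stores $\widehat{c}_0$, whose ends are $\pt{r_a}$ and $\pt{r_b}$, and that every tree edge prepends to the first end a vertex drawn from the vertices visible from the current first vertex and symmetrically appends one at the last end; so the sequences $\pi_a$ and $\pi_b$ obtained by peeling the first and last elements off the tethers down the branch are walks whose consecutive vertices are mutually visible (hence each segment lies in $\freespace$), with $\pi_a$ beginning at $\pt{r_a}$ and $\pi_b$ at $\pt{r_b}$, and---by the hypothesis on the leaf---ending at $\pt{d_a}$ and $\pt{d_b}$. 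Joining the segments head to tail and parameterizing over $\unitInterval$ gives $\tau_a,\tau_b$ with the prescribed values at $0$ and $1$; $\pi_a$ and $\pi_b$ have a common length, say $n$, since one vertex is appended to each end per tree edge.

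For the cable I would not build the witness $c$ directly but instead exhibit a re-parameterization pair showing $\cStar(\tau_a,\tau_b,c_0)\le\ell$ and then invoke Lemma~\ref{thm:cstar-is-less-than-ell}. Parameterize $\tau_a$ and $\tau_b$ so that the $i$-th of their $n$ segments occupies the block $[\tfrac{i-1}{n},\tfrac{i}{n}]$ of $\unitInterval$ at constant speed, and take $r_1=r_2$ to be the identity. From Definition~\ref{def:cat}, for $t=\tfrac{i}{n}$ the curve $\cathat{t}{\tau_a}{\tau_b}{c_0}$ is the tightening of ``$\tau_a|_{[0,t]}$ reversed, then $c_0$, then $\tau_b|_{[0,t]}$''---which, because $\widehat{\cdot}$ depends only on the homotopy class and the algorithm re-tightens at every node, is exactly the taut tether $\widehat{c}_i$ stored at depth $i$ on the branch. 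On each block, both robots move along a single straight visibility edge fed the same curve parameter; a lone straight segment is trivially the shortest path in its homotopy class, so the $\textbf{F}^*\textbf{O}^*\textbf{L}^*$ analysis of Lemma~\ref{thm:rvg-paths-are-convex} collapses to its single-segment case and tells us the consumed-cable length is convex on the block, hence at most $\max(\length{\widehat{c}_{i-1}},\length{\widehat{c}_{i}})$. Given that $\length{c_0}\le\ell$ gives $\length{\widehat{c}_0}\le\ell$, that the termination check in Algorithm~\ref{alg:search} gives $\length{\widehat{c}_n}\le\ell$, and that every node in the tree, by construction, realizes a motion abiding by the cable requirements (so $\length{\widehat{c}_i}\le\ell$ for the interior indices too), the maximum over blocks yields $\max_{t\in\unitInterval}\length{\cathat{t}{\tau_a}{\tau_b}{c_0}}\le\ell$, whence $\cStar(\tau_a,\tau_b,c_0)\le\ell$. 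Lemma~\ref{thm:cstar-is-less-than-ell} then certifies that $(\tau_a,\tau_b)$ is a solution; concretely, gluing the taut tethers across successive blocks supplies a $c$ satisfying {\sc c-i}--{\sc c-iii} of Definition~\ref{def:wfeas}.

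The hard part will be the bound on the \emph{interior} tether lengths $\length{\widehat{c}_i}$: the leaf test only certifies $\widehat{c}_n$ and $c_0$ only certifies $\widehat{c}_0$, so the whole argument leans on reading the search tree as one whose edges are, by definition, only those motions ``abiding by the cable requirements'', i.e.\ whose child nodes already satisfy $\length{\widehat{c}_i}\le\ell$; I would make this invariant explicit and check it is preserved when a visible vertex is pushed onto an end of the deque and the result re-tightened, using the single-edge convexity just described to conclude the in-between lengths are controlled by the two endpoint lengths. A secondary point demanding care is the identification $\cathat{\tfrac{i}{n}}{\tau_a}{\tau_b}{c_0}=\widehat{c}_i$ and the continuity of $t\mapsto\cathat{t}{\tau_a}{\tau_b}{c_0}$ across block boundaries; both follow because $\widehat{\cdot}$ sees only the homotopy class, so the one-sided limits at $t=\tfrac{i}{n}$ agree with $\widehat{c}_i$.
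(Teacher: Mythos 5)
Your argument is correct, but it is a genuinely different---and more careful---route than the paper's own proof, which disposes of the cable condition in one line by pointing at Theorem~\ref{thm:rvg-related-solution} together with the check on line~\ref{alg:termination-condition-line}. The difference matters: Theorem~\ref{thm:rvg-related-solution} gets its ``check only the two endpoints'' conclusion from Lemma~\ref{thm:rvg-paths-are-convex}, which requires the \emph{whole} paths to be shortest in their homotopy classes; an arbitrary root-to-leaf walk in the search tree is a concatenation of visibility edges but need not be globally homotopy-shortest (a robot may detour to a far vertex and return), so global convexity is not available and the two-endpoint test alone does not certify feasibility. Your decomposition repairs this: you apply the convexity lemma block-by-block, where each block is a single straight visibility edge and hence trivially shortest in its class, so the consumed cable on block $i$ is bounded by $\max(\length{\widehat{c}_{i-1}},\length{\widehat{c}_i})$, and you then route the conclusion through $\cStar(\tau_a,\tau_b,c_0)\le\ell$ and Lemma~\ref{thm:cstar-is-less-than-ell}. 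You are also right to flag the interior bounds $\length{\widehat{c}_i}\le\ell$ as the load-bearing hypothesis: the prose defining the search tree asserts that every edge ``abides by the cable requirements,'' but Algorithm~\ref{alg:search} as written enqueues all children of $V_a\times V_b$ without a length test, and without that invariant soundness genuinely fails (a branch whose intermediate taut tether exceeds $\ell$ can still pass the leaf test while admitting no feasible timing, since no re-parameterization can avoid the robot occupying the distant vertex). So your proof both matches the theorem's intent and makes explicit an invariant the paper leaves implicit; what the paper's shorter proof buys is brevity, at the cost of silently assuming the generated branch inherits the hypotheses of Theorem~\ref{thm:rvg-related-solution}.
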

\begin{proof}
    The root contains the taut tether $\hat{c}_0$, which by definition has end points $\pt{r_a}$ and $\pt{r_b}$.
    Because the algorithm checks whether a leaf contains a taut tether configuration whose end points are $\pt{d_a}$ and $\pt{d_b}$ on line~\ref{alg:destination-condition-line},
    the requirements $ \tau_a(0) = \pt{r_a} $, $ \tau_a(1) = \pt{d_a} $, $ \tau_a(0) = \pt{r_a} $, and $ \tau_a(1) = \pt{d_a} $ is satisfied.
    Using the result of Theorem~\ref{thm:rvg-related-solution}, the check on line~\ref{alg:termination-condition-line} ensures $(\tau_a, \tau_b)$
    satisfies the final requirement for a solution.
\end{proof}

\begin{theorem}
    Algorithm~\ref{alg:search} is complete.
\end{theorem}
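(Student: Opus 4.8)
The plan is to establish two things: (1) whenever the given \trpmpp instance has a solution, Algorithm~\ref{alg:search} returns one; and (2) it does so after finitely many dequeue operations. For (1), take an arbitrary solution $(\tau_a,\tau_b)$ and apply Theorem~\ref{thm:rvg-related-solution} (equivalently Corollary~\ref{thm:dist-optimality}) to obtain a pair $(\tau^{VG}_a,\tau^{VG}_b)\in\Pi_a\times\Pi_b$ with $\cStar(\tau^{VG}_a,\tau^{VG}_b,c_0)\le\ell$. Represent it as two RVG vertex sequences $(v^0_a,\dots,v^{m}_a)$ and $(v^0_b,\dots,v^{m}_b)$, where I would first pad the shorter one so both have the same length $m$ (e.g.\ by appending out-and-back moves at the terminal vertex, which leave that end's taut cable unchanged). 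Then, by induction on $i$, I would show the root has a descendant at depth $i$ whose stored taut cable is the tightening of the configuration the two robots have dragged $c_0$ into after their first $i$ edge-moves: the inductive step uses that $v^i_a$ is visible from $v^{i-1}_a$ (trivially so when they coincide), which after tightening is the first vertex of the current node's taut cable, so $v^i_a\in g.\mathrm{visibleVerts}(\cdot)$ for that node, and symmetrically for $b$; hence this child is actually generated in the inner \textbf{forall}. At depth $m$ the stored cable has endpoints $\pt{d_a}$ and $\pt{d_b}$, so line~\ref{alg:destination-condition-line} fires; and, by the convexity of cable consumption from Lemma~\ref{thm:rvg-paths-are-convex} used exactly as in the proof of Theorem~\ref{thm:rvg-related-solution}, the length of that taut cable is bounded by $\max(\length{\widehat c_0},\dots)\le\ell$, so line~\ref{alg:termination-condition-line} succeeds. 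Thus a ``returning'' node $n^\star$ sits at finite depth.

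For the fact that A* actually reaches $n^\star$: Algorithm~\ref{alg:search} is A* on a tree with finite branching factor (at most $(|\verts{O}|+4)^2$ children per node) and a heuristic assumed admissible. The standard A* invariant---at every iteration the priority queue contains an already-generated ancestor (or $n^\star$ itself) along the root-to-$n^\star$ path, with estimated cost no larger than the true optimum---together with finiteness of the set of nodes whose estimated cost is below that bound, yields termination with an optimal returning leaf; this also justifies the incumbent-based pruning described before the algorithm.

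The main obstacle is that finiteness claim, since the search tree is infinite. I would establish it from the hypotheses $|O|<\infty$ and $\ell<\infty$: let $\delta>0$ be the minimum distance between two distinct RVG vertices; any taut cable of length $\le L$ is a polygonal line whose edges each have length $\ge\delta$, hence at most $\lfloor L/\delta\rfloor$ of them, so there are only finitely many homotopy classes of cable of length $\le L$. A node reached with per-robot travel cost $\le C$ cannot store a cable longer than $\length{\widehat c_0}+2C$ (any cable length present beyond the initial amount was paid for, one unit at a time, by robot motion), so only finitely many nodes have $g$-cost $\le C$, and a fortiori only finitely many have estimated cost $\le C$; degenerate children whose tightening reproduces the parent's cable contribute zero motion and are treated as non-progress, so they do not spoil the count. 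Getting this bookkeeping precise---in particular the length-versus-motion inequality and the claim that the deque dynamics of the \emph{taut cable} field realize exactly the homotopy classes attainable by RVG motions---is the crux; both are essentially the converses of the soundness argument (Theorem~\ref{thm:soundness}) and of the structural (\(\textbf{F}^*\textbf{O}^*\textbf{L}^*\)) analysis inside Lemma~\ref{thm:rvg-paths-are-convex}.
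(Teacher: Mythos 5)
Your proposal is correct and follows the same route as the paper, which disposes of this theorem in a single sentence by appealing to the completeness of \astar search together with Corollary~\ref{thm:dist-optimality} (i.e., the RVG-based search space contains a solution whenever the \trpmpp has one). Everything else in your write-up---the induction placing an RVG solution at finite depth in the tree, and the minimum-edge-length/finite-branching argument showing that only finitely many nodes lie below any cost bound so that \astar actually terminates---is detail the paper leaves implicit under the phrase ``completeness of \astar search,'' and your version is, if anything, the more careful one.
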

\begin{proof}
    Completeness is shown by considering completeness of \astar search, and Corollary~\ref{thm:dist-optimality} that indicates that the appropriate structure generated from the RVG is the solution space.
\end{proof}

\section{From Paths to Trajectories} \label{sec:optimal-control}
\label{sec:times}

In Section~\ref{sec:problem-stat} we defined a \emph{distance optimal} solution. 
Solving a \trpmpp only requires the existence of a \wfeas. This abstracts away details of time and allows us to consider a geometric problem, i.e., one on paths. But actually moving the robots requires trajectories. This last step is important to solve the problem we have laid out: we have proven the existence of some timing for the paths returned by the algorithm, but some timings (indeed many of them!) can violate the cable constraint during execution. Thus, next,
 we prescribe a suitable timing.

Informally, an execution for a tethered pair of robots is a function that maps a \trpmpp solution (\textit{i.e.}, a pair of paths) to a pair of trajectories (as defined by \citet{Latombe1991}),
one for each robot.
We assume robots $a$ and $b$ both have maximum speeds of $mv$ and that 
turning is instantaneous.
For simplicity, we further assume that the robots can achieve speed $mv$ immediately.

For a given a pair of paths $(\tau_a, \tau_b)$, define 
\begin{equation}
\qquad T =  \max\left\{ \frac{\length{\tau_a}}{mv}, \frac{\length{\tau_b}}{mv}\right\},
\tag{\mbox{$\dagger$}}
\label{eq:trajectoryT}
\end{equation}
that is, $T$ is the time which it takes for the robot
that, moving at maximum speed throughout, arrives at its destination second. Then we construct trajectories  $\Upsilon_a:[0,T] \to \freespace$ and $\Upsilon_b:[0,T] \to \freespace$ 
as follows
\begin{equation}
\quad\Upsilon_a(\lambda) = \tau_a\left(\frac{\lambda}{T}\right) \quad\text{ and }\quad
\Upsilon_b(\lambda) = \tau_b\left(\frac{\lambda}{T}\right).
\tag{\mbox{$\ddagger$}}
\label{eq:trajectory}
\end{equation}

\begin{figure*}[t]
    \centering
    \begin{subfigure}[t]{\quarterpage}
        \centering
        \includegraphics[scale=0.7]{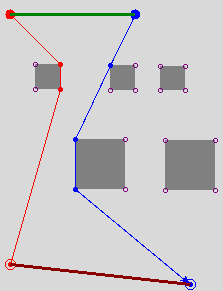}
        \caption{$\ell = 200$ and $\ell = 300$}\label{fig:discussion-diff-length-300}
    \end{subfigure}\hfill
    \begin{subfigure}[t]{\quarterpage}
        \centering
        \includegraphics[scale=0.7]{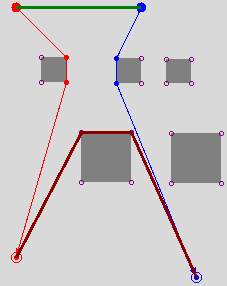}
        \caption{$\ell = 400$}
    \end{subfigure}\hfill
    \begin{subfigure}[t]{\quarterpage}
        \centering
        \includegraphics[scale=0.7]{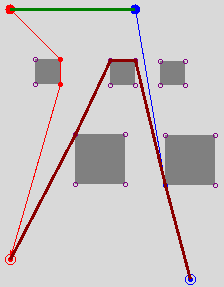}
        \caption{$\ell = 500$}
    \end{subfigure}\hfill
    \begin{subfigure}[t]{\quarterpage}
        \centering
        \includegraphics[scale=0.7]{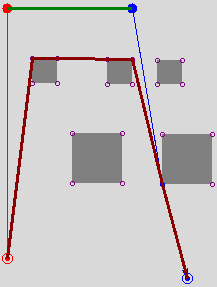}
        \caption{$\ell = 700$}
    \end{subfigure}
    \caption{Solving the same scenario with different cable length.}\label{fig:discussion-diff-length}
\end{figure*}

Note that by construction neither robot exceeds the maximum velocity.

Now,  with notation for trajectories, the time index means we can define a temporal notion of optimality.

\begin{definition}[Time optimality]\label{def:optimal-time-solution}
    A pair of trajectories $(\Upsilon^\star_a, \Upsilon^\star_b)$ that are executions 
    for a \trpmpp solution is called \emph{time optimal} if no other trajectories  
    have both $a$ and $b$ arriving at $\pt{d_a}$ and $\pt{d_b}$ sooner 
    than the later of the two under $\Upsilon^\star_a$ and $\Upsilon^\star_b$.
\end{definition}

Leading to the following final claim:

\begin{theorem}
    Given a distance optimal solution $(\tau_a, \tau_b)$ returned by Algorithm~\ref{alg:search},
    the pair of trajectories constructed by 
    \eqref{eq:trajectoryT} and \eqref{eq:trajectory} constitutes a time optimal execution.
\end{theorem}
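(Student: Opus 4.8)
The plan is to establish two things: first, that the pair $(\Upsilon_a,\Upsilon_b)$ defined by \eqref{eq:trajectoryT}--\eqref{eq:trajectory} is a legitimate execution of the \trpmpp solution $(\tau_a,\tau_b)$ --- that is, that it respects the maximum speed $mv$ and keeps the cable length below $\ell$ at all times --- and second, that no execution of any \trpmpp solution can bring both robots to their destinations strictly before time $T$.

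For the first part, the speed bound is immediate from the construction: the traversal speed along $\tau_a$ under $\Upsilon_a(\lambda)=\tau_a(\lambda/T)$ is $\length{\tau_a}/T \le mv$, and likewise for $b$. For the cable constraint, the crucial observation is that $\Upsilon_a$ and $\Upsilon_b$ are obtained from $\tau_a$ and $\tau_b$ by the \emph{same} reparameterization $\lambda \mapsto \lambda/T$, so at every instant $\lambda$ both robots are at the same curve parameter. Since $(\tau_a,\tau_b)$ is returned by Algorithm~\ref{alg:search}, the two paths are the shortest in their homotopy classes, so Lemma~\ref{thm:rvg-paths-are-convex} applies and the consumed cable length is a convex function of $\lambda$; hence its maximum over $[0,T]$ is attained at $\lambda\in\{0,T\}$. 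At $\lambda=0$ the cable is $\widehat{c}_0$ and at $\lambda=T$ it is the taut cable stored in the returned leaf, both of length $\le \ell$ (the former by the problem data, the latter by the termination check on line~\ref{alg:termination-condition-line}, cf.\ Theorem~\ref{thm:rvg-related-solution} and Lemma~\ref{thm:cstar-is-less-than-ell}). Thus \textsc{c-ii} holds throughout and $(\Upsilon_a,\Upsilon_b)$ is an execution.

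For the second part I would argue a simple lower bound. Any execution $(\Upsilon'_a,\Upsilon'_b)$ of any \trpmpp solution $(\tau'_a,\tau'_b)$ moves robot $a$ along a path of length $\length{\tau'_a}$ at speed at most $mv$, so $a$ cannot reach $\pt{d_a}$ before time $\length{\tau'_a}/mv$, and similarly for $b$; hence the later of the two arrival times is at least $\max\{\length{\tau'_a},\length{\tau'_b}\}/mv$. Because $(\tau_a,\tau_b)$ is distance optimal (Definition~\ref{def:optimal-solution}), $\max\{\length{\tau_a},\length{\tau_b}\}\le \max\{\length{\tau'_a},\length{\tau'_b}\}$, so this lower bound is at least $T$. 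Under our construction, however, $\Upsilon_a(T)=\tau_a(1)=\pt{d_a}$ and $\Upsilon_b(T)=\tau_b(1)=\pt{d_b}$, while the slower robot --- whose path has length exactly $mv\,T$ --- cannot arrive before $T$, so the later arrival time of $(\Upsilon_a,\Upsilon_b)$ is exactly $T$. Comparing with Definition~\ref{def:optimal-time-solution} yields time optimality.

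The main obstacle I anticipate is not the lower bound, which is essentially bookkeeping on speeds combined with Corollary~\ref{thm:dist-optimality}/Definition~\ref{def:optimal-solution}, but rather pinning down the feasibility claim cleanly: one must be sure the paths returned by Algorithm~\ref{alg:search} are genuinely the shortest in their homotopy classes (so that Lemma~\ref{thm:rvg-paths-are-convex} applies to the common reparameterization), and one must dispense with the mild subtlety in the notion of "arrival time'' --- a path could in principle pass through its endpoint early, but for the slower robot the length budget forbids this, and for the faster robot an early arrival only helps. A secondary point worth a sentence is that Definition~\ref{def:optimal-time-solution} quantifies over executions of \emph{possibly different} solutions, which is precisely why distance optimality of $(\tau_a,\tau_b)$, rather than mere feasibility, is what drives the argument.
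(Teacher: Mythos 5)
Your proposal is correct and follows essentially the same route as the paper's (much terser) proof: feasibility of the common reparameterization $\lambda\mapsto\lambda/T$ via the convexity of cable consumption from Lemma~\ref{thm:rvg-paths-are-convex}, and time optimality from the speed bound $mv$ combined with distance optimality of $(\tau_a,\tau_b)$. Your version is in fact more careful than the paper's, explicitly checking the endpoint cable lengths and noting that Definition~\ref{def:optimal-time-solution} ranges over executions of other solutions, which is exactly where distance optimality is needed.
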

\begin{proof}
    Algorithm~\ref{alg:search} produces distance optimal solutions such that
    $(\tau_a, \tau_b) = (\widehat{\tau}_a, \widehat{\tau}_a)$.
    Then Lemma~\ref{thm:rvg-paths-are-convex} provides that the trajectories $(t_a, t_b)$ abide by the cable constraints.
    The distance metric searches for minimum of the maximum distance traveled by either robots and the total time taken, given by \eqref{eq:trajectoryT}, is the fastest the robots could travel that distance.
\end{proof}

\section{Discussion of the Method} \label{sec:experimental-results}

\pgfplotstableread[row sep=\\,col sep=&]{
    l   & expanded   & generated  & noHV  & noHE  & hASV  & hASE  & time  & ratio\\
    200 & 290       & 1445      & 6189  & 13707 & 76    & 310   & 23.44 & 4.98\\
    300 & 290       & 1445      & 3564  & 5076  & 161   & 752   & 49.26 & 4.98\\
    400 & 41        & 450       & 0     & 0     & 34    & 352   & 9.82  & 10.97\\
    500 & 14        & 181       & 0     & 0     & 12    & 132   & 2.31  & 12.93\\
    700 & 3         & 35        & 0     & 0     & 3     & 35    & 0.29  & 11.67\\
    }\mydata

\pgfplotstableread[row sep=\\,col sep=&]{
    alg     & expanded  & generated & time  & ratio\\
    DP      & 41        & 118       & 4.52  & 2.88\\
    \astar  & 6         & 52        & 1.26  & 8.67\\
    }\dpdata

To demonstrate this algorithm, we implemented it in Python (v3).
Our implementation makes heavy use of \citet{cgal:all} for computational geometry algorithms, namely convex hull and triangulation methods.
Fig.~\ref{fig:app-gui} shows a screenshot of the user interface.

\subsection{Empirical Assessment of the Method: a Comparison} \label{sec:disscuss-dp}

\begin{figure}[ht]
    \centering
    \includegraphics[scale=0.3]{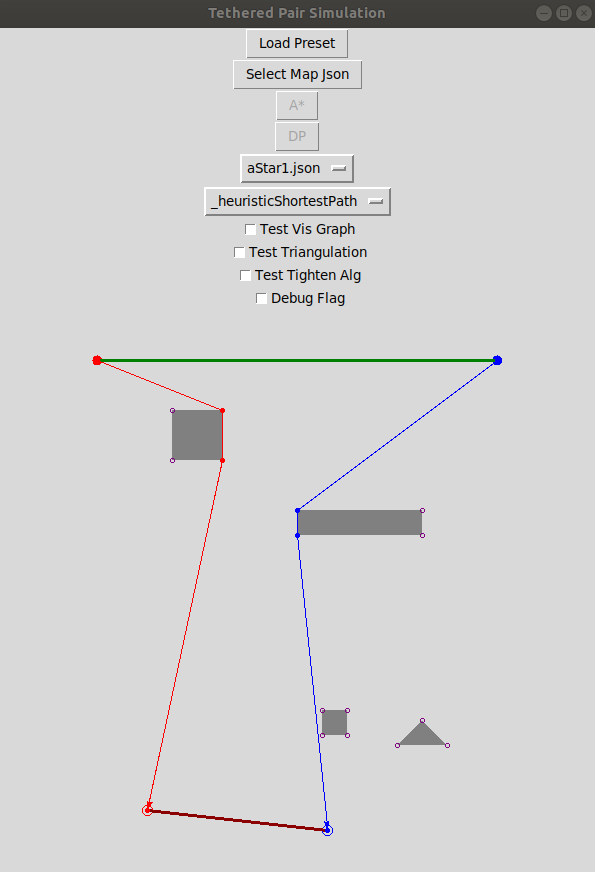}
    \caption{
        A screenshot of user interface of our implementation.
        The red and blue circle filled circles represent the robots.
        Obstacles are in grey.
        The initial and final configuration of the cable are shown in green and dark red.
        The prescribed path for each robot is shown with lines of the same color.
        The RVG vertices where for each robot is shown with small dots along the path.
    }
    \label{fig:app-gui}
\end{figure}

\begin{figure}[t]
    \centering
    \includegraphics[scale=0.7]{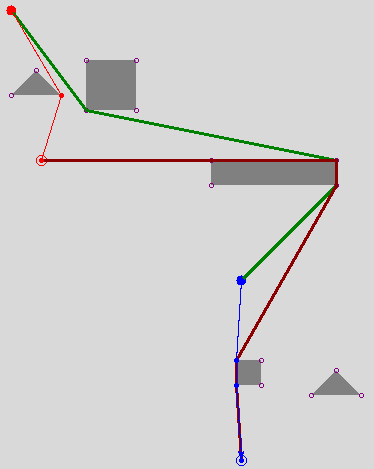}
    \caption{The scenario used for comparison between Dynamic Programming and \astar formulations.}
    \label{fig:dp-scenario}
\end{figure}

\begin{figure}[t]
    \centering
    \includegraphics[scale=0.55]{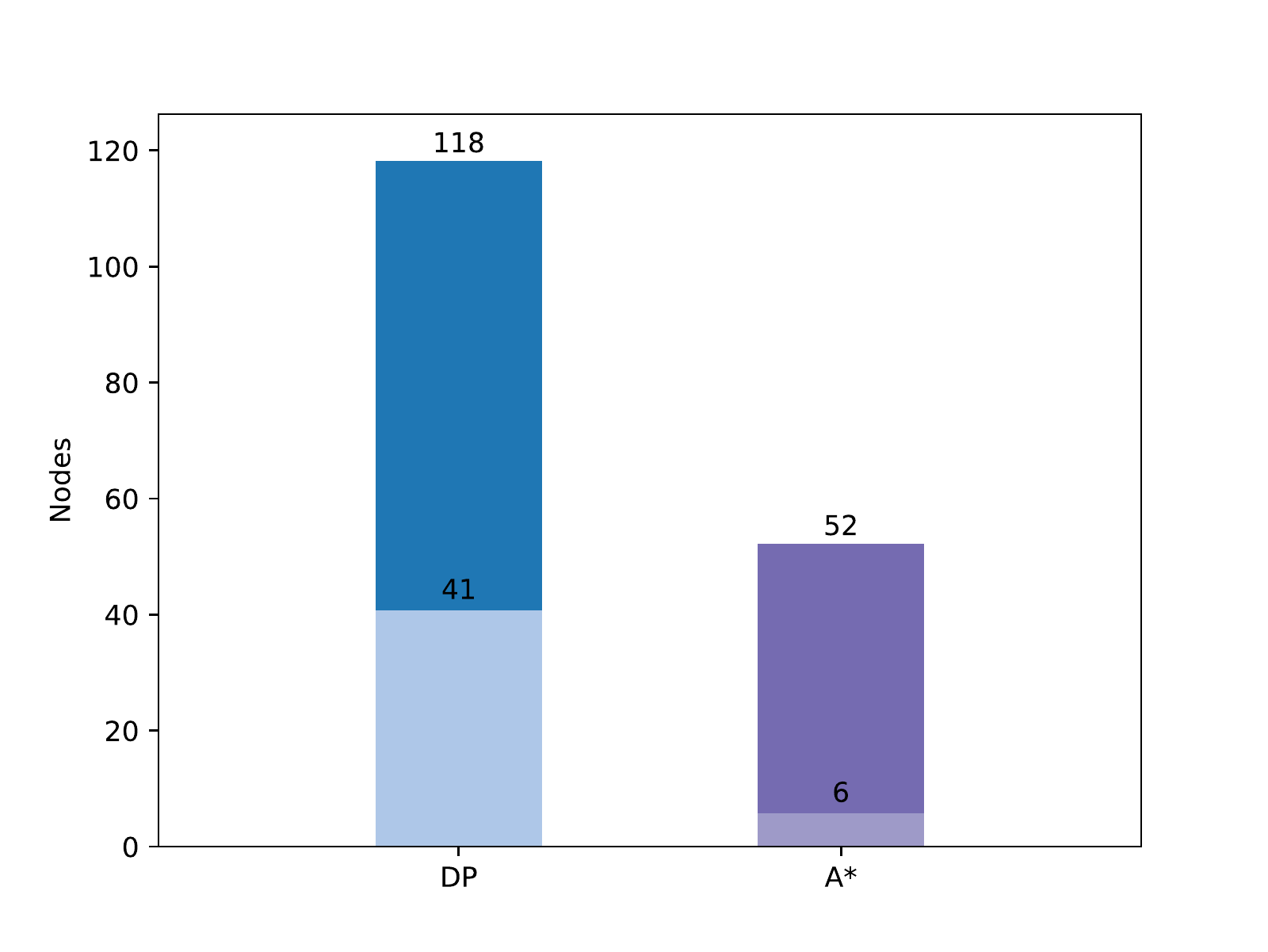}
    \caption{
        A comparison between the performance of Dynamic Programming vs. the \astar formulation.
        In each bar, the light and dark color represent the number of expanded and generated search nodes for each algorithm.
        The bars are stacked as expanded nodes are a subset of the generated ones.
    }
    \label{fig:dp-plot}
\end{figure}
    
Our earlier workshop paper \citep{teshnizi2014tethered} proposed a dynamic programming (DP) approach for motion planning for a tethered pair.
The approach uses, as a sub-procedure, a planner for the motion of a single robot connected to a fixed base with a bounded tether.
The method in that work does not rely on any specific single robot planner, as long as it can return as output the cost and the length of the tightened cable.
As stated in that paper, the DP technique only solves instances of \trpmpp of a particular form: 
it will only return motions of the robots that retain at least one contact point on the cable throughout the execution.\footnotemark~ %
Hence, it is not a complete algorithm. 

\footnotetext{Another fact is that checking whether two paths, one for each robot, will satisfy the cable length constraint when executed concurrently depends on the speeds at which they move; hence, that paper ought to have made some such assumption, as done in Section~\ref{sec:times}.}

Meaningful comparison of Algorithm~\ref{alg:search} and the DP approach requires that we select a problem for which the latter will return a solution.
The scenario shown in Fig.~\ref{fig:dp-scenario} meets this criterion and it was 
the setting used for our empirical assessment.
To remove the impact of implementation details on performance, we implemented the DP technique in Python using a specialized \trpmpp planner setup to solve for a single robot.

Here, for a heuristic function, we use the straight line (Euclidean) distance (which we denote SLD) between the robots and their destinations. This can be computed in constant time.
We measured the performance in terms of the number of search tree nodes expanded and generated as well as wall time.
Table~\ref{tab:dp} and Fig.~\ref{fig:dp-plot} present these metrics.
It is obvious that \astar formulation outperforms DP.

\begin{table}[h]
    \centering
    \caption{Comparison of the performance between DP versus \astar formulations on the scenario in Fig.~\ref{fig:dp-scenario}.}\label{tab:dp}
    \begin{tabular}{ |c|c|c|c| }
        \hline
        Algorithm & Expanded & Generated & Wall Time (\si{\second}) \\
        \hline
        DP & 41 & 118 & 4.52 \\ 
        \astar & 6 & 52 & 1.26 \\ 
        \hline
    \end{tabular}
\end{table}

\subsection{The Impact of Cable Length} \label{sec:discuss-good-bad-ugly}

\begin{figure}[t]
    \centering
    \includegraphics[scale=0.55]{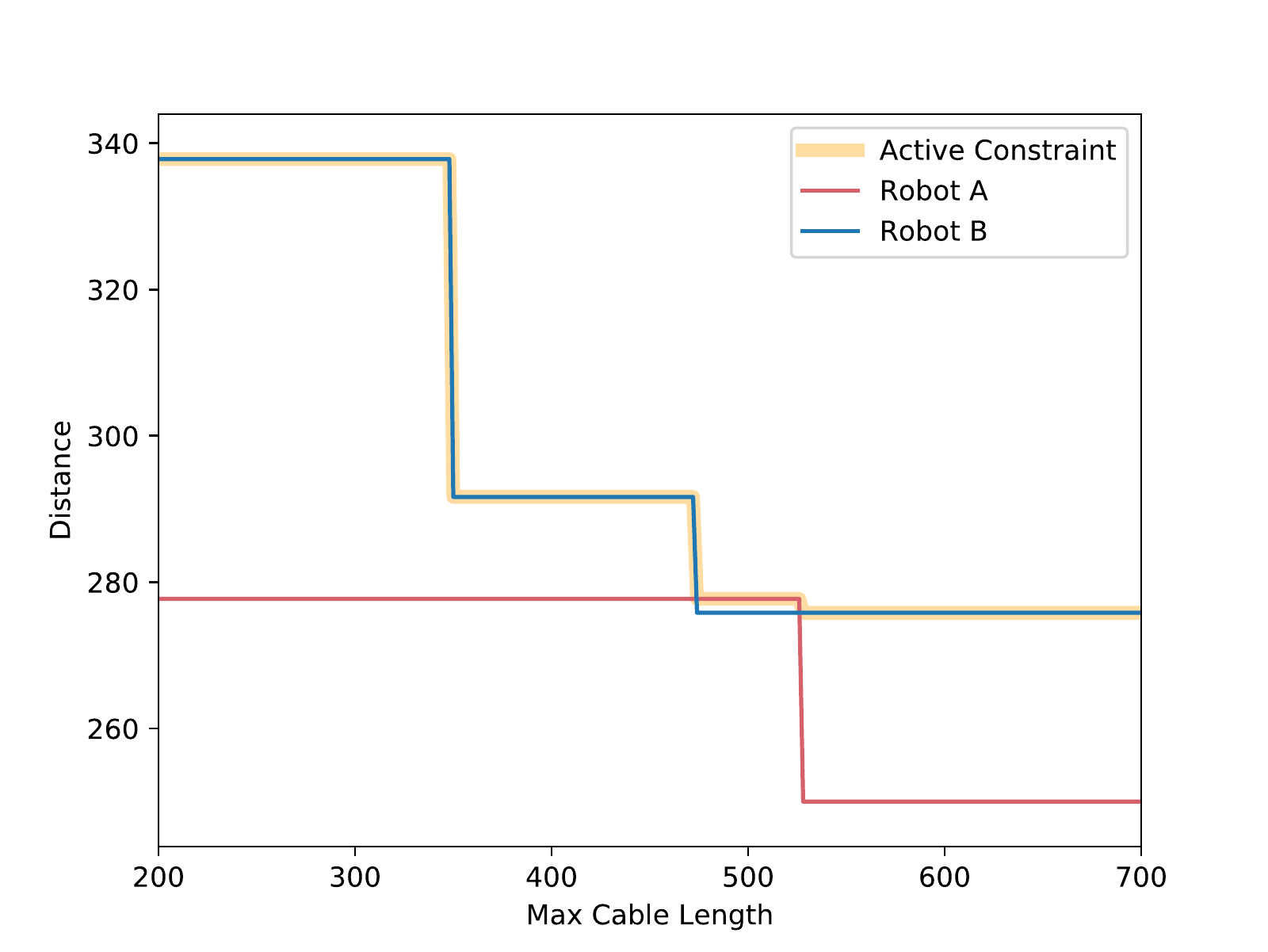}
    \caption{
        The distance traveled by each robot in optimal solution for different maximum cable lengths for scenario shown in Fig.~\ref{fig:discussion-diff-length}.
        The constraint that the planner attempts to optimize is highlight for each cable length.
    }
    \label{fig:discussion-path-length}
\end{figure}

\begin{figure}[t]
    \centering
    \includegraphics[scale=0.55]{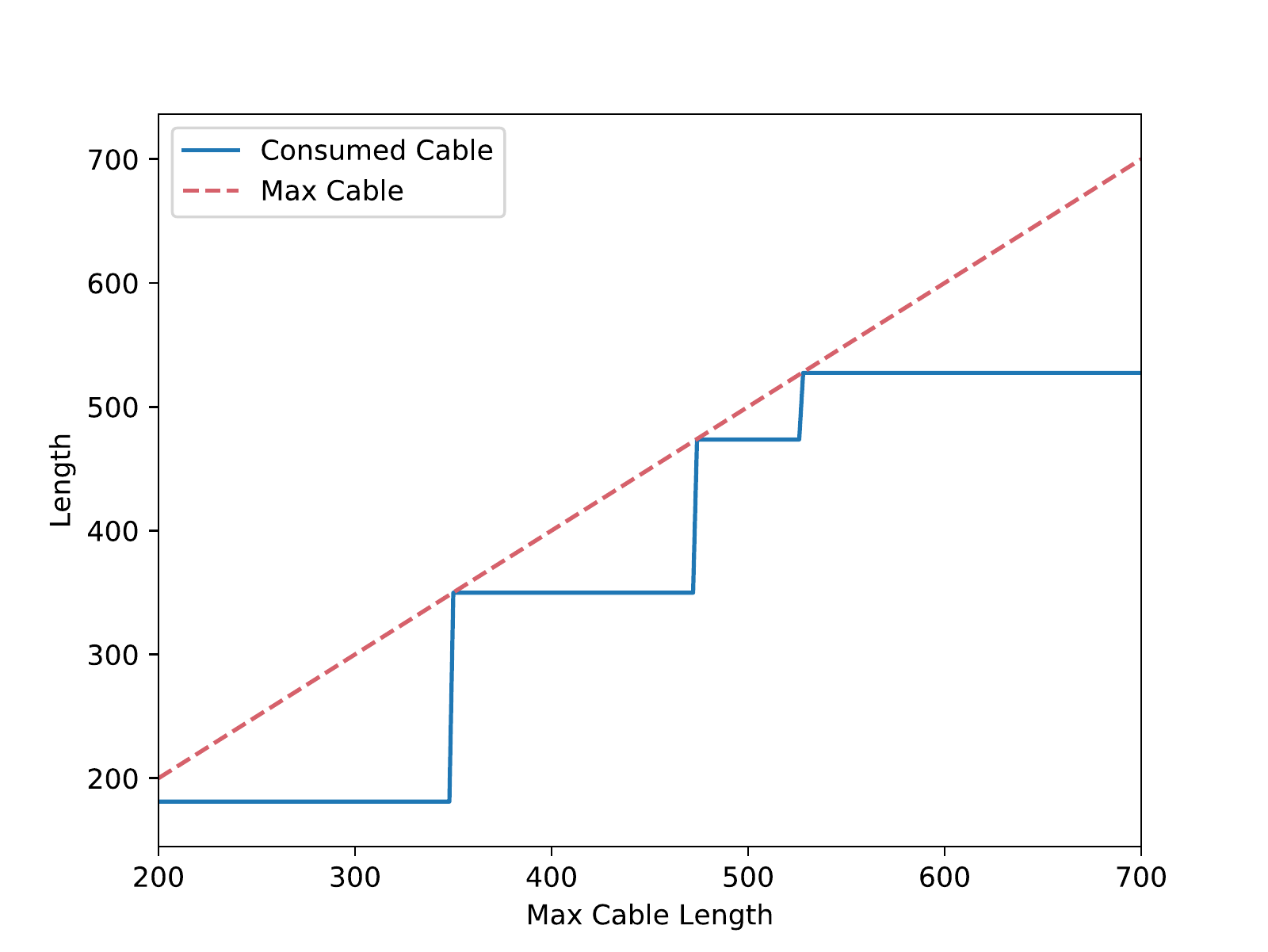}
    \caption{The length of the consumed cable in optimal solution for different maximum cable lengths for scenario shown in Fig.~\ref{fig:discussion-diff-length}}
    \label{fig:discussion-cable-consumed}
\end{figure}

\begin{figure}[t]
    \centering
    \includegraphics[scale=0.55]{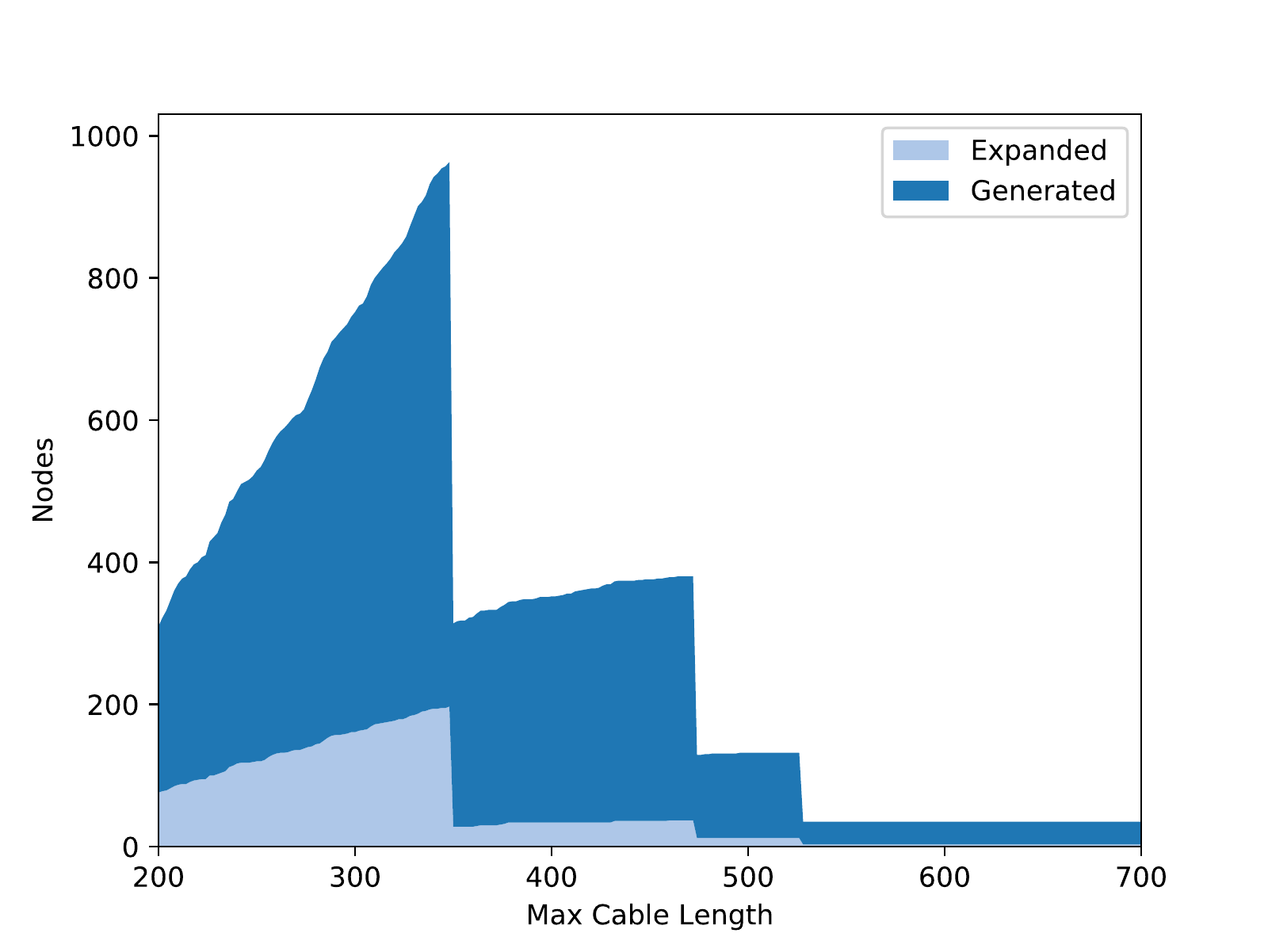}
    \caption{The number of expanded vs. generated nodes for different maximum cable lengths for scenario shown in Fig.~\ref{fig:discussion-diff-length}.}
    \label{fig:discussion-cable-length}
\end{figure}

The amount of the cable influences the search in two ways. To gain an
understanding of the impact of cable length on the problem, we solved the
sequence of planning instances depicted in
Fig.~\ref{fig:discussion-diff-length}. For each of these the environment is kept the same,
so geometrical complexity is constant,
only the maximum cable length is modified.  
Naturally, the objective values of the solutions decrease monotonically with increasing cable length. 
This can be seen in Fig.~\ref{fig:discussion-path-length}: as the length of the cable is increased, the planner minimizes the active constraint, which can be seen to switch from one robot to the other around $\ell = 470$ and again at $\ell = 525$.
The steps are qualitative changes in solutions occurring as the tether becomes long enough to
permit a new homotopy class of solutions.
Observing Fig.~\ref{fig:discussion-cable-consumed} one sees that the steps happen as the length of the consumed cable in the optimal solution equals $\ell$ exactly.

Every solution for a cable of length $\ell$ is a solution for longer cables too.
But if longer cables mean larger solutions spaces, they also mean larger search spaces.
One might expect, therefore, that solving problems with longer cables would take more time.
Fig.~\ref{fig:discussion-cable-length} shows the number of search nodes for cables ranging in length from \num{200} to \num{700}. Observe that there is a steady increase with the algorithm exploring a subset of a search space which is growing, from \num{200}--\num{350}. But then the cable becomes long enough for the search to terminate much sooner. The pattern is repeated a few times.
What happens is that the heuristic pulls the search towards parts of that space which,
though promising, are not fruitful because the topological constraint will ultimately make them dead-ends.
With a longer cable, branches that moved in the right direction but then turned out to be infeasible, now become
feasible.

\subsection{The Effect of Informed Search} \label{sec:disscuss-informed-search}

The topology of the environment as well as the cable length affects both the number of expanded and generated nodes in the search.
Making an appropriate choice of heuristic function for \astar can help.
In the previous sections, we reported results from the 
simple SLD heuristic. 
This heuristic is na\"\i ve, completely discarding topological and obstacle constraints, but still improves the efficiency of the search substantially.
We established this fact via a comparison to a base-line. We implemented Uniform Cost Search (UCS) which yields an optimal output but without any forward estimates.
Fig.~\ref{fig:discussion-uc-sld} shows the impact of the heuristic information is significant.

More sophisticated heuristics, that consider more than simple Euclidean distance, can help reduce the search time further.
Specifically, we considered two other heuristic functions, each being lesser relaxations of the problem:
\begin{itemize}
\item Shortest Path Distance (SPD) respects obstacles: it is the length of the shortest path from a robot's position vertex to its destination. We compute this by running \astar on
the single (untethered) robot problem, itself using SLD.
\item \trpmppjr respects obstacles and some topological constraints. It is computed by running a version of the problem instance at hand by but with a longer maximum cable, with SPD. 
(It is so named because it considers a diminished version of \trpmpp.)
\end{itemize}

For \trpmppjr, we sought to develop a heuristic that considers some aspects of the cable's configuration. 
The results described in previous section show that searches with longer cables tend to be
faster, so it is plausible that \trpmppjr could be computed quickly and provide high-quality
information to guide the search for the original \trpmpp.
Based on those results, it uses a cable length multiplier, as a function of the number of vertices on the taut tether, to allow faster computation for more complex cable configurations.
Notice, also, that SPD can be thought of as \trpmppjr but with $\ell=\infty$.
Hence, \trpmppjr dominates SPD, which in turn dominates SLD.

As is apparent from Fig.~\ref{fig:discussion-sld-spd-trmpp-nodes}, these heuristics do reduce the number of nodes expanded and generated by the search, but
the improvements---at least in this scenario---are minuscule.
Another important consideration is that, while 
SLD is a constant time operation, SPD and \trpmppjr usually require substantial computation.
When we compare wall times, in Fig.~\ref{fig:discussion-sld-spd-trmpp-time},
SPD offers some small improvement, whereas
\trpmppjr has no discernible value.
Both SPD and \trpmppjr involve searching relaxed problems and, in light of Valtorta's Theorem \citep{Valtorta1984}, it is perhaps unsurprising that the total number of nodes
are so similar (visible in Fig.~\ref{fig:discussion-sld-spd-trmpp-nodes}). 

\begin{figure}[t]
    \centering
    \includegraphics[scale=0.45]{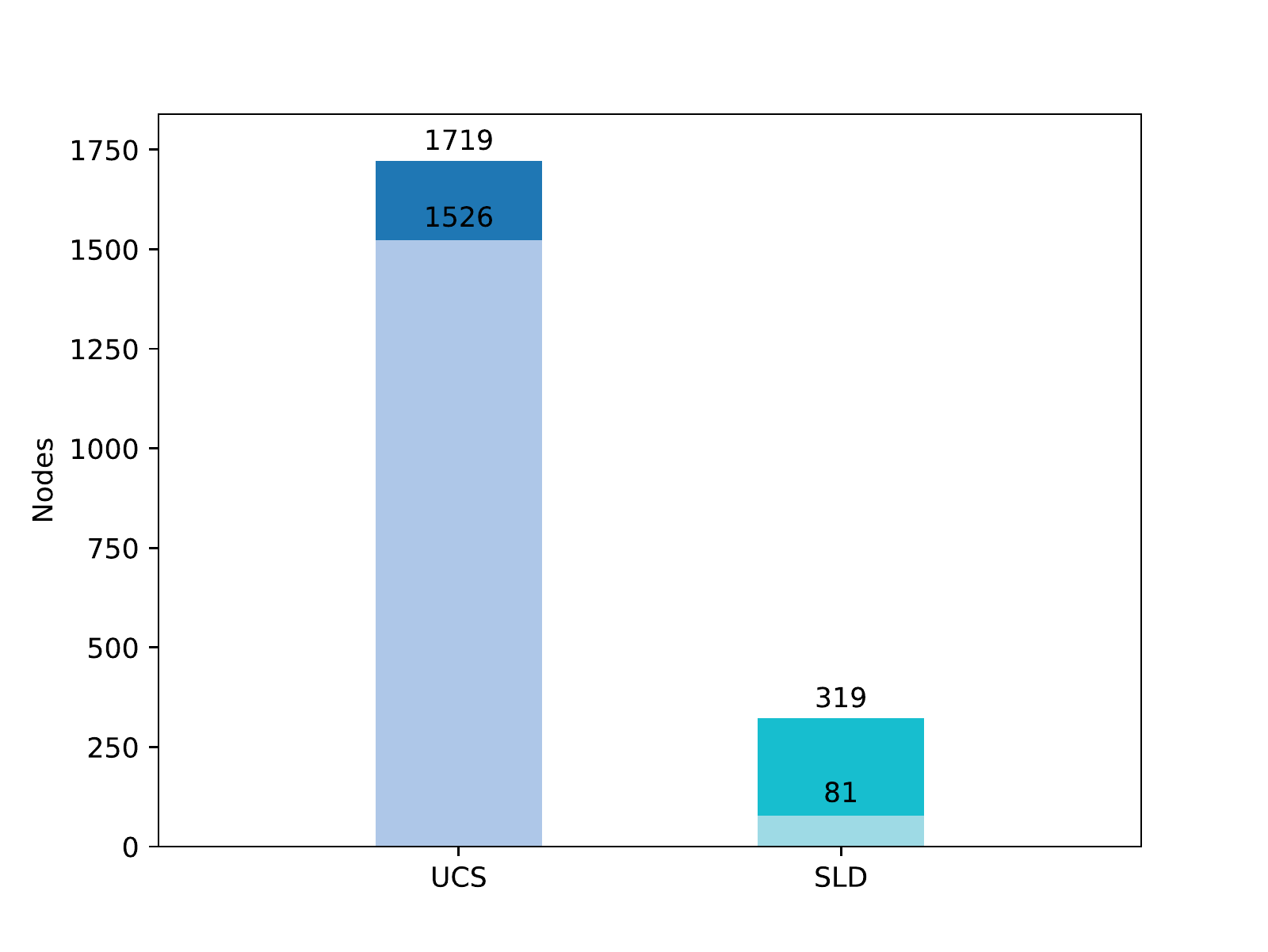}
    \caption{
        A comparison of number expanded and generated nodes between a Uniform Cost Search (UCS) vs. \astar with Straight Line Distance (SLD) heuristic.
        The light and dark colors are stacked bars, representing expanded and generated nodes respectively.
    }
    \label{fig:discussion-uc-sld}
\end{figure}

\begin{figure}[t]
    \centering
    \includegraphics[scale=0.45]{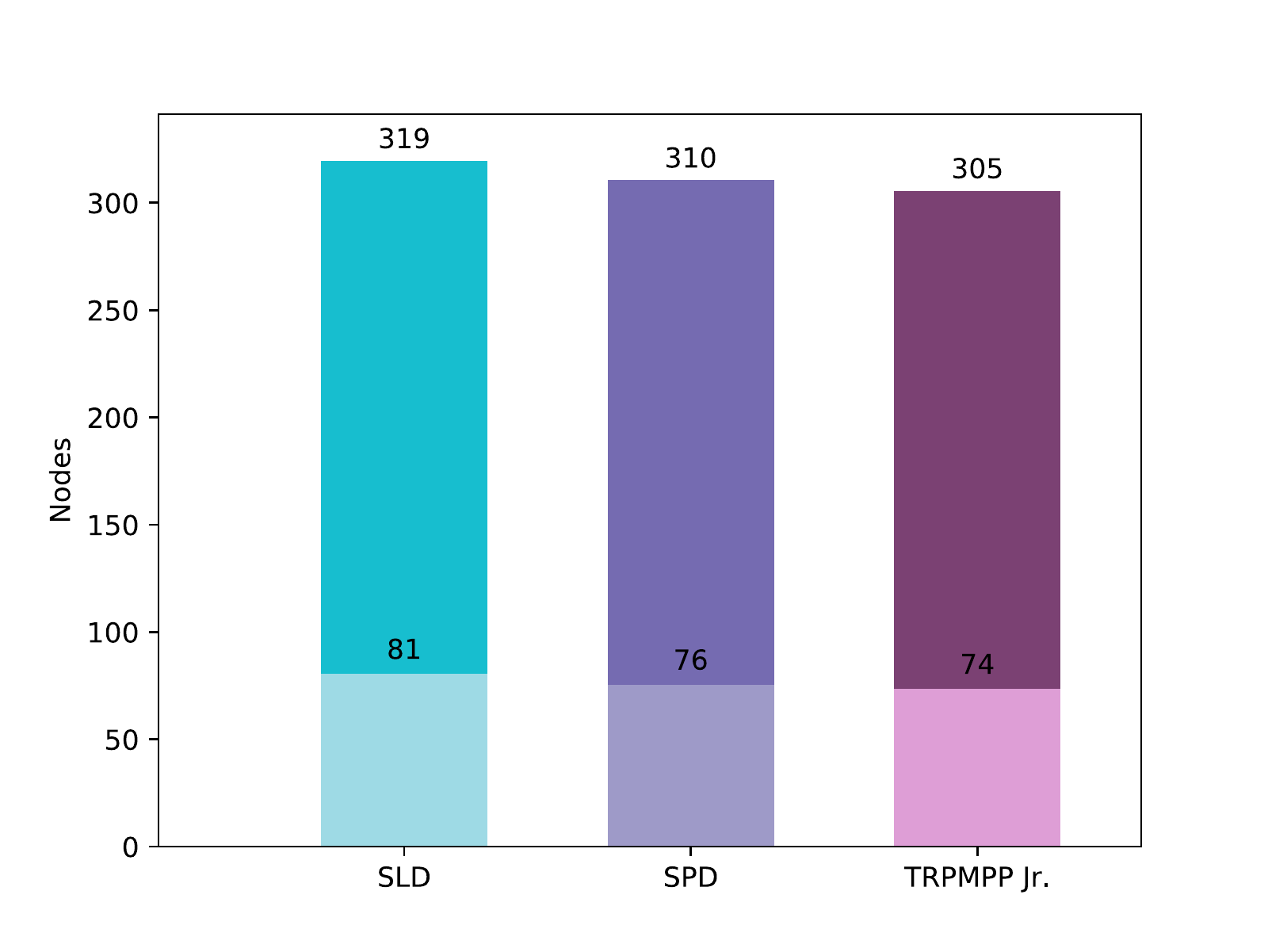}
    \caption{
        A comparison of number expanded and generated nodes between \astar with three different heuristic functions:
        Straight Line Distance (SLD), Shortest Path Distance (SPD), and \trpmppjr
        The light and dark colors represent expanded and generated nodes.
    }
    \label{fig:discussion-sld-spd-trmpp-nodes}
\end{figure}

\begin{figure}[t]
    \centering
    \includegraphics[scale=0.45]{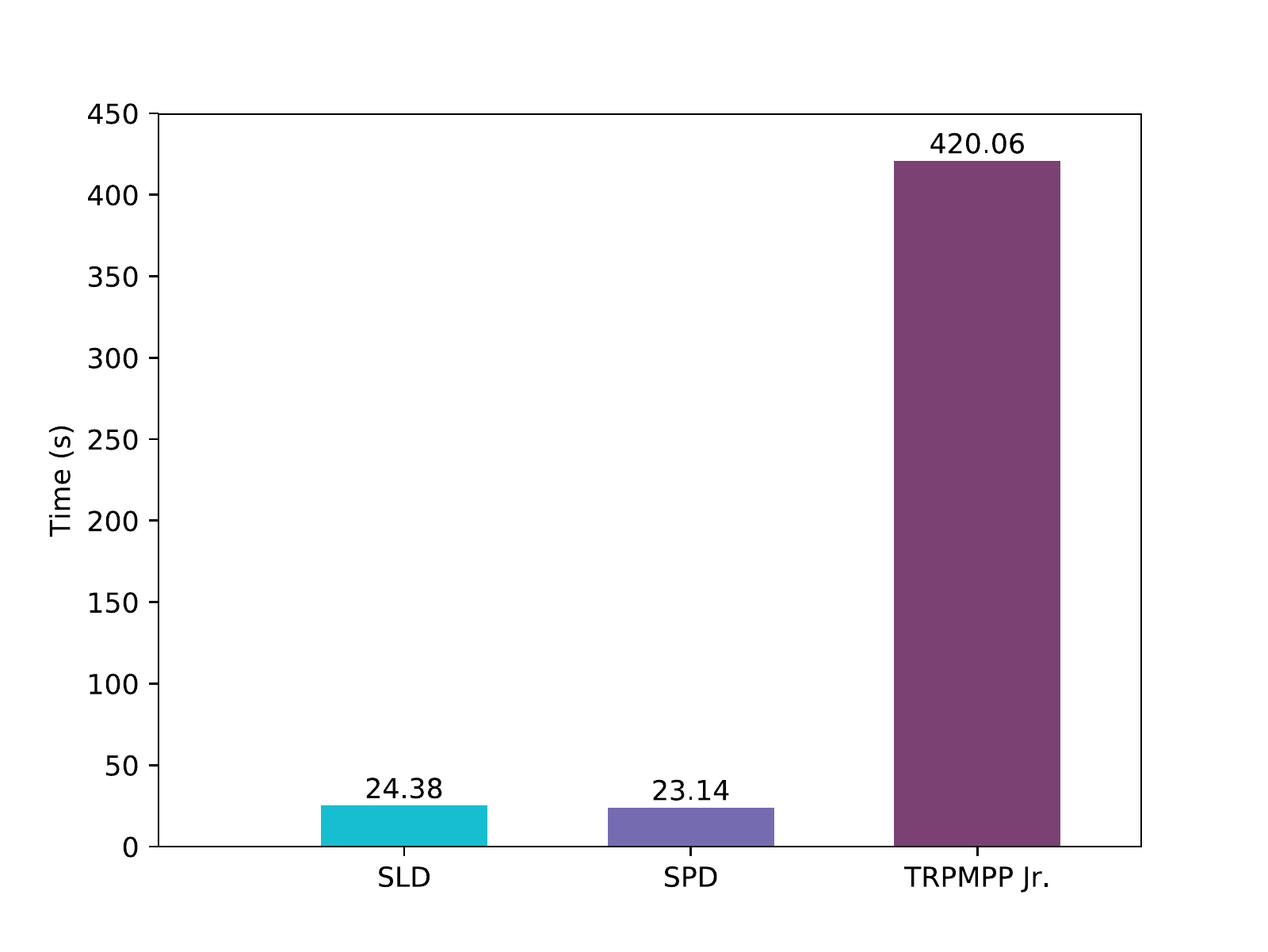}
    \caption{
        A comparison of computation wall time between \astar with three different heuristic functions:
        Straight Line Distance (SLD), Shortest Path Distance (SPD), and \trpmppjr
    }
    \label{fig:discussion-sld-spd-trmpp-time}
\end{figure}

\section{Why is the tethered robot pair problem difficult?} \label{sec:cspace-structure}

Next we further consider the relationship of the present paper with respect to \citet{teshnizi2014tethered}; that paper defined and examined a useful way of decomposing the c-space of a single tethered robot.
It constructed a tree structure that formed a skeleton that was related to an atlas representation of the c-space manifold.\footnote{Technically one must consider a manifold with a boundary, and the notion of `chart' here may be closed set; for simplicity throughout we will ignore these nuances.}

Given the inputs to the single robot problem, one visualizes its c-space most easily by sketching the charts and showing where they touch (see Fig.~\ref{fig:single-robot}).
For the single tethered robot, this is (locally) a 2 dimensional object, and consequently easy to see. 
This structure allows for the convenient and simultaneous representation of two different pieces of data: the motion of the robot through space, and the homotopy class of the cable. 

The single tethered robot problem is a special case of tethered pairs where one robot remains stationary: requiring only 2 degrees of freedom.
For a general tethered pair, a comparable figure would require four dimensions, impeding easy visualization.
In Fig.~\ref{fig:single-robot-move-base} we have moved the base of the tether slightly to the right to show the changes in the charts.
The boundaries of these charts are 2D surfaces in a 4D space.

When there is some vertex where the cable makes contact,
it is helpful to imagine two trees rooted at that point.
Fig.~\ref{fig:pair-chart-bounds} provides a  view of this.
From the perspective of our search tree, cable-obstacle contact that forms the common root is an element within the deque. (Recall the nodes of the search tree have cable configurations expressed as lists of vertices, which operate like a deque.) The subtrees on either side are represented by lists, via pushing and/or popping, that retain that element.

The robots share a total amount of cable, so as one either
consumes or gives up cable, the depth of the tree or the circular regions circumscribing the outer limits of the other robot reduce or increase, respectively.
One attractive aspect of this mental picture is how the cable describes a tree quite obviously: each time the cable wraps around an obstacle, some cable is consumed and it is as if it is anchored at that wrapping point. The recursive nature is clear because one obtains a new problem, similar to the previous one, but `smaller' in the sense of available cable.
This point of view was the main idea presented by~\citet{teshnizi-2016-tethered-paris-workshop}.
\begin{figure}[t]
    \centering
    \includegraphics[scale=0.38]{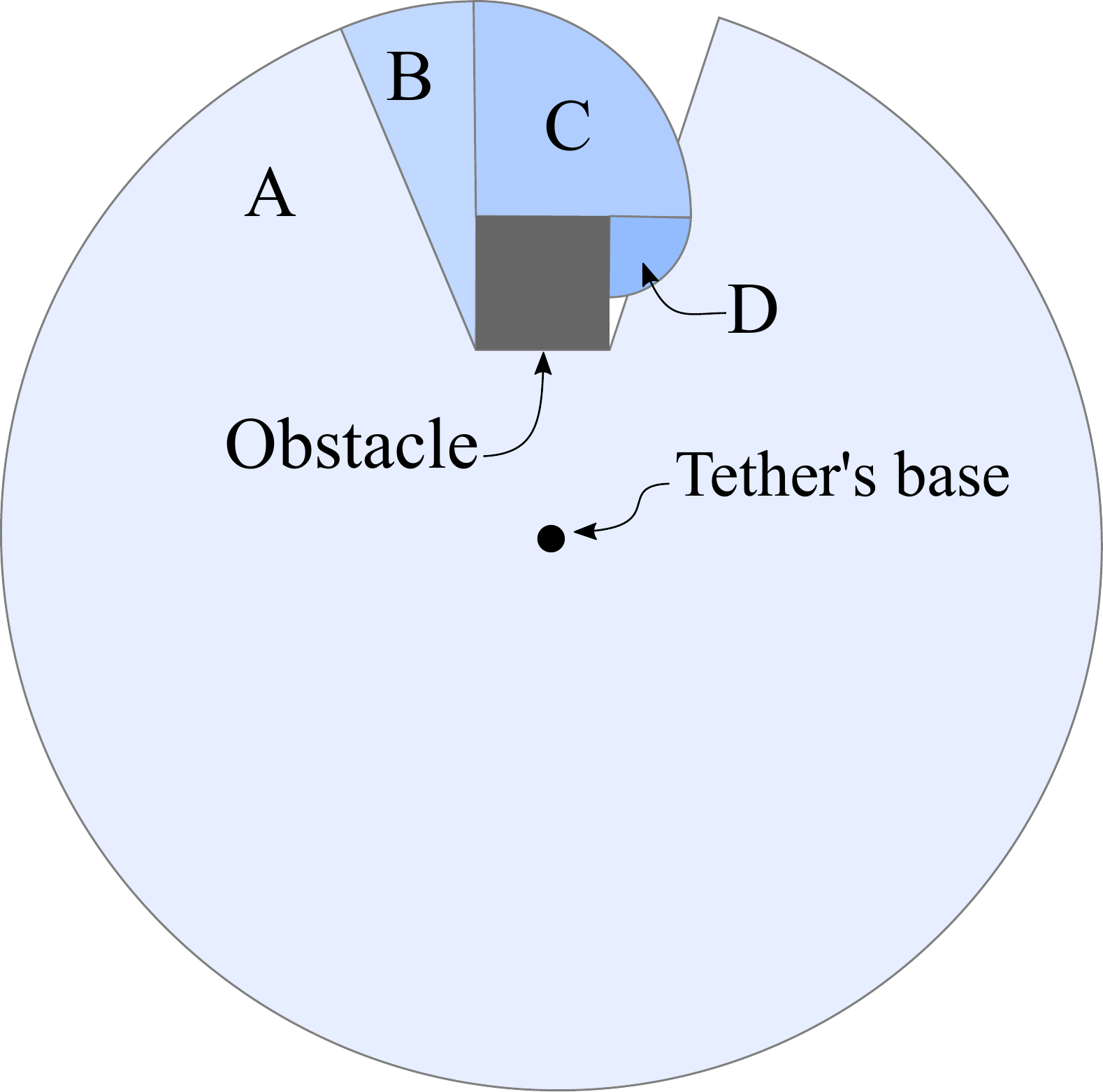}
    \caption{
        Some of the charts of the atlas representing the c-space of a tethered robot.
        $A$, $B$, $C$, and $D$ are four charts in the atlas.
        In the illustration, gray lines mark the chart boundaries.
    }
    \label{fig:single-robot}
\end{figure}

\begin{figure}[t]
    \centering
    \includegraphics[scale=0.35]{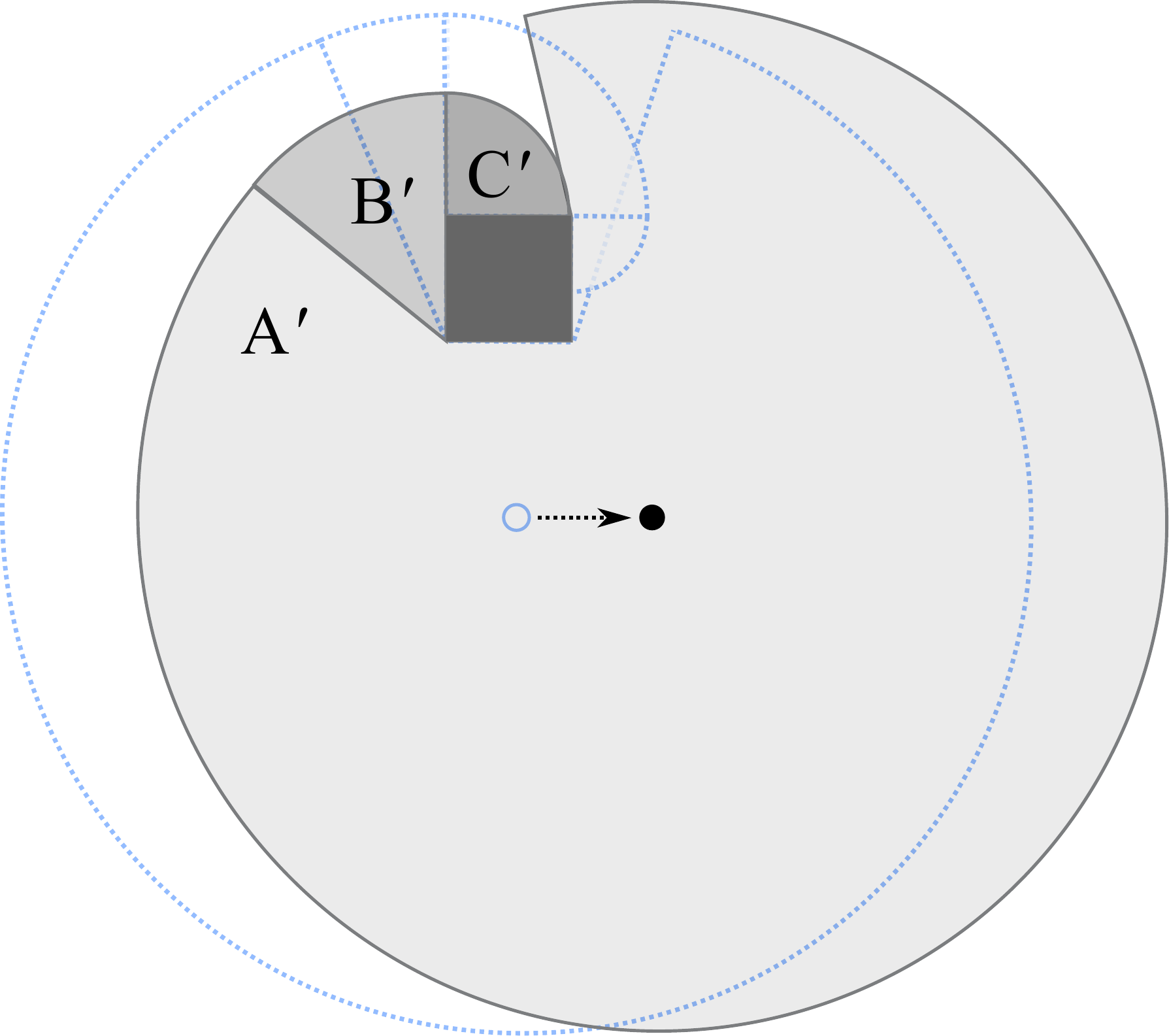}
    \caption{
        Moving the base of the tether leads to a changes in the boundaries of the charts.
        Here $A'$, $B'$, $C'$ are the new charts.
        The charts prior to moving the base are illustrated in light blue for comparison.
    }
    \label{fig:single-robot-move-base}
\end{figure}

If the robots start popping cable contacts out of the deque, moving up the tree (e.g., by following \textbf{F} segment after \textbf{F} segment) eventually the cable-obstacle contact that forms the common root is popped. At this point the node deque contains
only the two robots and physically the robots must be directly visible to one another. They can then move and, stringing the tether between them, make a contact with another obstacle vertex. Doing so, leads back to a
Fig.~\ref{fig:pair-chart-bounds}-like scenario. Hence, we can see that this top-level portion of c-space has many trees hanging like tendrils off of it. The top-level portion of c-space unlike the sort of charts made of pieces of spirals as depicted above, is truly 4D and comprises sub-regions where the pair of robots maintain line-of-sight. 
Consider that the reduced visibility graph, as a discrete structure, describes where tendrils can be found. 
But, further, the reduced visibility graph, as embedded in the plane, gives a way to think about the 4D chart if we restrict
motions to the graph edges: within this space, the robots are either on the same vertex, or one edge away from each other.
When restricted to paths on the visibility graph, within the 4D chart the robots must move on a sort of abstract boundary as they leapfrog and never get too far from one another, any deviation therefrom forming a tendril.

Previously, we said that our definition of
$\cStar$ resembles the  classic
\frechet~distance, but pointed out that it differs because $\cStar$ respects the  
homotopy class of the paths being considered.
Adopting the perspective of the c-space manifold, if 
$\tau_1$ and $\tau_2$ are curves not in 
$\freespace$, but instead in the 4D c-space, then 
$\cStar$ \emph{becomes} the \frechet~distance. Our algorithm
operates on paths in the workspace $\freespace$, which one might think of as projections from the 4D configuration space. Hence, to recover enough information to compute the distance requires
the cable configuration
 in addition to $\tau_1$ and $\tau_2$.

Prior work (such as our own~\citep{teshnizi2014tethered} and also others) has recognized the structure of the tree-like discrete skeletons that describe the atlas for a fixed tether point. Most would recognize that pushing and popping (via physical motions over critical events in terms of visibility) only occurs at the free end of the cable, so the data-structure is essentially a stack. So in summary: a 2D c-space gives a stack, a 4D c-space gives two stacks joined end-to-end, a deque.
We are not aware of a more general realization that two robots give rise to a discrete structure mirroring a deque, nor that the case of the empty structure has special significance.
One observation that is plain from this discrete data-structure point of view is that there can be multiple ways to arrive at the same 
deque contents. Indeed, this underscores the fact that the tethered pair case occurs on a manifold with cycles within it; cycles that are absent for an anchored tether.
Hence, the tendrils are woven together and one can `pick them up' to re-root from several places.

\begin{figure}[t]
    \centering
    \begin{subfigure}[t]{\halfpage}
        \centering
        \includegraphics[scale=0.6]{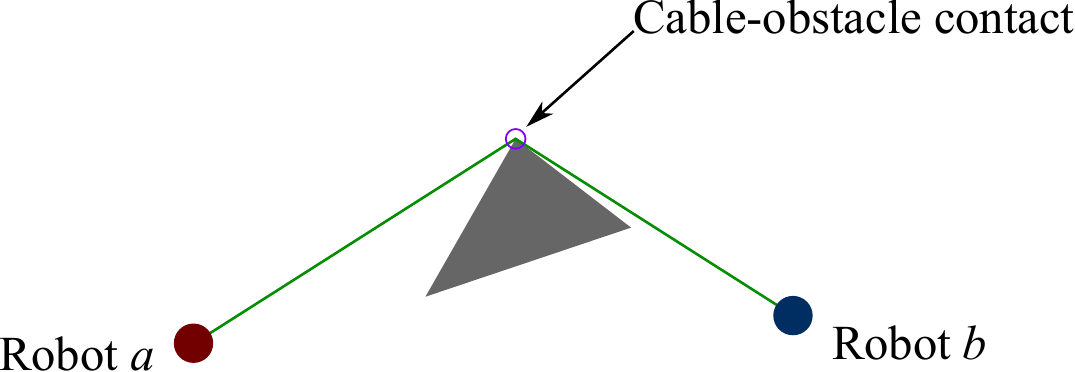}
        \caption{The configuration of the robots and their cable.}
    \end{subfigure}\hfill%
    \begin{subfigure}[t]{\quarterpage}
        \centering
        \includegraphics[scale=0.6]{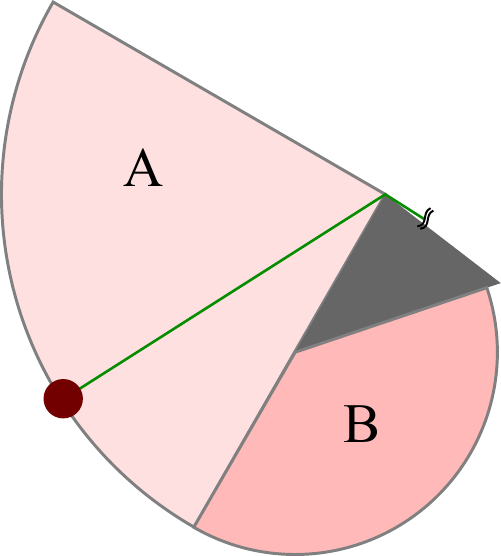}
        \caption{The visibility charts for robot $a$.}
    \end{subfigure}\hfill%
    \begin{subfigure}[t]{\quarterpage}
        \centering
        \includegraphics[scale=0.6]{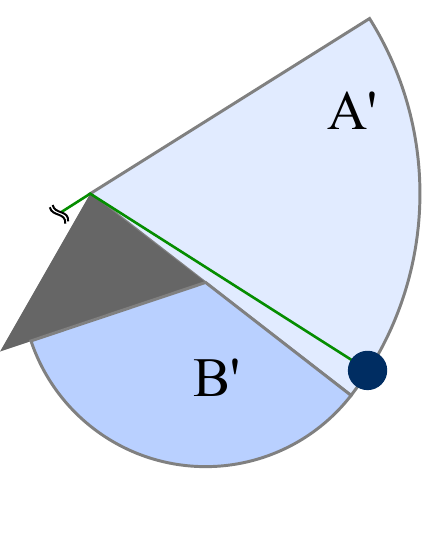}
        \caption{The visibility charts for robot $b$.}
    \end{subfigure}
    \caption{A snapshot of the conjoined atlases for a tethered robot pair after the first contact is made between the cable and an obstacle.}
    \label{fig:pair-chart-bounds}
\end{figure}

\section{Conclusion} \label{sec:conclusion}




This work presents, to the best of our knowledge, a first algorithm for planning distance optimal motions for a pair of tethered robots.
The finite cable length
makes this problem different compared to work in the literature on tether pairs:
prior work focuses on planning motions for a robot pair that need only consider topological constraints. In such cases, challenge becomes one of representing topological properties, via signatures or other techniques. 
In our case, it has turned out that the challenge was not the topological element of the problem, but rather the metric
one that arises when considering a cable of bounded length.
That is, determining if a path pair is a solution to a given \trpmpp requires ensuring existence of a feasible trajectory for the cable of bounded length.
The very question of whether 
solutions can be found in $\Pi_a \times \Pi_b$ (i.e., the RVG) is far more obvious when 
one robot's motion only constrains the other's topologically. It is not hard to see that timing (i.e., the robot's motions relative to one another) becomes coupled and the problem involves dealing with some sort of coordination.
Even glancing at the paper it is clear that greater care than is typical was needed to define what constituted a solution to a tethered pair motion planning problem so as to encode the finiteness of the cable.

Fortunately, shortest paths on an RVG help us avoid consideration of curve parameterizations: they have monotonically increasing derivative of cable consumption.
Through this property, we can consider a much reduced set of solutions without losing completeness or optimality.
Further, this provides an elegant representation of the search tree; one containing different cable configurations up to homotopy.
We start by adding the initial configuration to the root node and iteratively create the configurations.
Once we find a cable configuration that has the two robots' destinations at its ends, only then we need to check whether that configuration is permitted but the length of the cable.
Finally, using the properties of the solutions that our algorithm gives we are able to give an optimal execution. One that also minimizes the arrival time for the two robots.

The algorithm provided in this work enumerates all cable configurations that take the robots to their respective destination from least cost to most cost.
We investigated the possibility of early termination in some branches in the search tree.
However, our results have not yet provided adequate evidence that this added complexity improves performance consistently.
Our immediate future work will explore the different ways to terminate the search in a branch of the search tree.

\subsection*{Acknowledgements} 
{\small
This work was supported, in part, by the National Science Foundation through awards IIS-1453652 and IIS-1849249.
Fig.~\ref{fig:roped-team}, whose author is Cactus26, is used without any further edits under Creative Commons Attribution-Share Alike 3.0 Unported license\footnote{https://creativecommons.org/licenses/by-sa/3.0/deed.en}.
}

\bibliographystyle{spbasic}      
\bibliography{library}

\end{document}